%% file: arxiv.tex
\documentclass{article}
\usepackage{iclr2027_conference,times}
\usepackage[T1]{fontenc}

\input{math_commands.tex}

\usepackage{amsmath}
\usepackage{amssymb}
\usepackage{amsfonts}
\usepackage{amsthm}
\usepackage{booktabs}
\usepackage{array}
\usepackage{float}
\usepackage{placeins}
\usepackage{mathtools}
\usepackage{microtype}
\usepackage{xcolor}
\usepackage[letterpaper,margin=1.18in]{geometry}
\usepackage{newunicodechar}
\newunicodechar{–}{\textcolor{black}{\textendash}}
\newunicodechar{’}{\textcolor{black}{\textquoteright}}
\usepackage{hyperref}
\usepackage{url}
\newcommand{\winci}[3]{{\renewcommand{\arraystretch}{1}\begin{tabular}[t]{@{}r@{}}#1\\{[#2,\,#3]}\end{tabular}}}
\newcommand{\resulttable}{\normalsize\color{black}\setlength{\tabcolsep}{3pt}\renewcommand{\arraystretch}{1.05}\setlength{\belowcaptionskip}{\baselineskip}}
\newcommand{\tablegroup}[2]{\addlinespace[3pt]\multicolumn{#1}{@{}l}{\textit{#2}}\\[2pt]}
\newcommand{\tablenote}[1]{\par\vspace{\baselineskip}\begin{minipage}{\linewidth}\normalsize #1\end{minipage}}

\theoremstyle{plain}
\newtheorem{theorem}{Theorem}[section]
\newtheorem{proposition}[theorem]{Proposition}

\theoremstyle{definition}

\theoremstyle{remark}
\newtheorem{remark}[theorem]{Remark}

\newcommand{\D}{\mathcal{D}}
\newcommand{\X}{\mathcal{X}}
\newcommand{\Y}{\mathcal{Y}}

\newcommand{\Prob}{\mathbb{P}}
\newcommand{\piref}{\pi_{\mathrm{ref}}}
\newcommand{\method}{\textsc{UNM-DPO}}
\newcommand{\methodln}{\textsc{ULNM-DPO-WR}}

\title{Uncertainty-Normalized Margins for\\
Direct Preference Optimization}

\author{%
\begin{minipage}[t]{\dimexpr\textwidth-2\tabcolsep\relax}
\raggedright
\textbf{Sadegh Khorasani\textsuperscript{1}, Petrus Mikkola\textsuperscript{2},}
\textbf{Matthias Grossglauser\textsuperscript{1}}\\[0.5em]
\normalfont\small
\textsuperscript{1}School of Computer and Communication Sciences, EPFL, Lausanne, Switzerland\\
\textsuperscript{2}University of Helsinki, Helsinki, Finland\\[0.5em]
{\footnotesize\texttt{sadegh.khorasani@epfl.ch}\quad
\texttt{petrus.mikkola@gmail.com}}\\
{\footnotesize
\texttt{matthias.grossglauser@epfl.ch}}
\end{minipage}%
}

\iclrfinalcopy
\begin{document}

\maketitle

\begin{abstract}

Direct preference optimization (DPO) models binary preferences through a Bradley–Terry model with a common noise scale, without explicitly accounting for preference strength or prompt-dependent uncertainty from human feedback. We introduce uncertainty-normalized margin DPO (\method{}), which combines strength-dependent margins with a learned prompt scale. Motivated by a heteroskedastic Bradley--Terry model, we develop two training objectives. Both compare the implicit rewards of preferred and rejected responses, derived from response log-probability ratios to a reference policy. Advantage-only (AO) divides this reward difference by the prompt scale before subtracting the margin; whole-residual (WR) subtracts the margin before dividing by the scale. For the WR comparison model, we establish a necessary and sufficient condition under which known margins make the prompt scale identifiable. We introduce a practical procedure for learning the scale. Building on WR, we introduce ULNM-DPO-WR, which normalizes each response’s implicit reward by its length. We evaluate our methods against DPO and related baselines on HelpSteer2 and HelpSteer3, using the Skywork reward model as a judge. With Llama-3.1-8B-Instruct, ULNM-DPO-WR achieves tie-adjusted win rates against matched DPO of 68.00\% and 65.31\% on evaluation panels, with higher mean rewards and shorter responses on average. \textcolor{black}{On AlpacaEval with a GPT-4.1 judge and GPT-4-Turbo reference answers, the same 8B policy achieves a length-controlled win rate of 21.62\%, compared with 16.39\% for DPO and 15.30\% for SimPO.} These results demonstrate the potential of combining preference-strength margins, learned prompt scales, and length normalization for policy optimization.

\end{abstract}

\section{Introduction}
\label{sec:introduction}

Learning from preference comparisons is a practical way to improve a
language model's responses without requiring an ideal answer for every prompt \citep{ouyang2022training}. Direct preference optimization (DPO) \citep{rafailov2023direct} simplifies this process by training a policy directly on preferred and rejected responses, avoiding a separate reward-model training stage and reinforcement learning loop. Its standard Bradley-Terry \citep{bradley1952rank} formulation uses binary preference feedback, while human annotators can often provide richer information by revealing the strength of their preference judgments.

Preference strength-aware methods use this information \citep{touvron2023llama}. For example, ODPO introduces a margin (`offset') into DPO under the Bradley-Terry model \citep{amini2024offset}, while HelpSteer2-Preference considers two margin variants of such model \citep{wang2025helpsteer2preference}. However, the uncertainty in preference feedback is inherently linked to preference strength \citep{wang2024secrets}, and how this uncertainty impacts policy optimization remains relatively unexplored. Preference uncertainty varies from prompt to prompt, since the certainty and strength of preference judgments naturally depend on the underlying prompt (e.g., unclear task specification) \citep{zhang2025ICML}. This motivates combining margins with a prompt-dependent scale, while distinguishing the roles of these two quantities in policy optimization. For a brief literature review of the topic,
see Appendix~\ref{app:related-work}.

We introduce uncertainty-normalized margin DPO (\method{}), which combines preference-strength margins with a learned prompt-dependent scale. Motivated by a Bradley-Terry model with prompt-dependent noise, we develop two training objectives: advantage-only (AO) and whole-residual (WR). Three key components are margin, prompt scale, and the implicit reward difference between preferred and rejected responses, computed from their log-probability ratios to a fixed reference policy. The AO and WR objectives differ in how they combine reward difference with the margin and scale: AO divides the reward difference by the scale before subtracting the margin, whereas WR subtracts the margin first and then divides by the scale. Thus AO lets the scale change the reward difference required to exceed the margin, while WR preserves that threshold and changes the loss's sensitivity around it.

Learning at the prompt scale is challenging due to per-prompt data scarcity, as preference datasets often contain only one comparison per prompt \citep{bai2022training,wang2025helpsteer3}, as well as model identifiability issues. To learn it, we first split training prompts into five folds and train five auxiliary policies, each on all folds except one. Each comparison's implicit reward difference is computed using out-of-fold auxiliary policy. We pool these differences and strength annotations to fit bounded prompt-scale network with a regularized loss, then freeze it before training the final policy. Because response log-probability ratios sum contributions across tokens, response length can affect both the margin and the scale fitting. Inspired by length normalization in preference optimization \citep{meng2024simpo}, we address this through
\methodln{}, a length-normalized extension of WR. It divides each
response's log-probability ratio by its token count before forming the reward difference and uses the same normalization when fitting the scale. Only the final policy is needed at inference time.

In summary, our contributions are:
\begin{itemize}
\setlength{\itemsep}{2pt}
\item We combine strength-dependent margins,
a learned prompt scale, and response-length normalization in a
reference-relative policy objective, with consistent normalization
in scale fitting and final-policy training.
\item For a fixed prompt,
we establish a necessary and sufficient condition under which known
margins identify the prompt scale from given WR comparisons. We also show that response length variability alone can induce unequal optimal scales,
and that length normalization removes this effect in a controlled
setting. These results do not guarantee recovery of latent annotation noise.
\item We compare with DPO and related
methods on HelpSteer2, HelpSteer3 \citep{wang2025helpsteer3}, and
AlpacaEval \citep{alpaca_eval}. With Llama-3.1-8B-Instruct, \methodln{}
achieves Skywork-judged tie-adjusted win rates against matched DPO of
68.00\% and 65.31\% on the 400- and 800-prompt HelpSteer3 panels,
respectively, with higher mean rewards and shorter responses.
On AlpacaEval, using a GPT-4.1 judge and common GPT-4-Turbo reference
answers, its length-controlled win rate is 21.62\%, versus 16.39\% for
DPO and 15.30\% for SimPO.
\end{itemize}

\section{Preliminaries}
\label{sec:preliminaries}

\subsection{Preference data with strengths}

Let \(\Y\) and \(\X\) be the prompt and response spaces, respectively. We use
uppercase letters for random variables and lowercase letters for their
realizations. For a prompt \(Y\in\Y\) and two responses \(X_1,X_2\in\X\),
\textcolor{black}{the ordered comparison before annotation is \(W=(Y,X_1,X_2)\).}
A binary (non-tied) preference annotation supplies a direction \(S\in\{-1,1\}\), with
\(S=1\) indicating that \(X_1\) is preferred, and an ordinal strength
\(K\in\{1,\ldots,K_{\max}\}\), where larger values indicate stronger
preference. \textcolor{black}{Orienting the responses by the observed
direction gives \(Z=(Y,X^+,X^-,K)\),}
where \(X^+\) and \(X^-\) are the preferred and rejected responses.
Write \(z_i=(y_i,x_i^+,x_i^-,k_i)\), \(i=1,\ldots,N\), for the
training comparisons and \(\D\) for their uniform empirical distribution.

\textcolor{black}{We encode \emph{preference strength} using a fixed nonnegative,
nondecreasing map \(m:\{1,\ldots,K_{\max}\}\to[0,\infty)\)
and a fixed margin scaling coefficient \(\tau\geq0\).}
The default choice \(m(k)=k\) assigns equally spaced operational margins. %

\subsection{DPO reward-gap parameterization}

Direct preference optimization (DPO) uses a trainable policy \(\pi_\theta(\cdot\mid y)\) and a fixed reference policy \(\piref(\cdot\mid y)\), both with full support on $\X$ for every prompt $y \in \Y$. Define the sequence-level log ratio and its pairwise difference by
\begin{align}
g_\theta(y,x)
&=\log\frac{\pi_\theta(x\mid y)}{\piref(x\mid y)},\\
A_\theta(y,x,x')
&=g_\theta(y,x)-g_\theta(y,x').
\label{eq:dpo-gap}
\end{align}
For \(z=(y,x^+,x^-,k)\), we abbreviate
\(A_\theta(z)=A_\theta(y,x^+,x^-)\).

The DPO parameterization for a Kullback--Leibler (KL) penalty
\(\beta_0D_{\mathrm{KL}}(\pi_\theta(\cdot\mid y)\|\piref(\cdot\mid y))\)
with fixed \(\beta_0>0\) represents the implicit reward as
\(r_\theta(y,x)=\beta_0g_\theta(y,x)+b_\theta(y)\)
\citep{rafailov2023direct}.  The prompt-only offset \(b_\theta(y)\)
cancels in differences, giving
\begin{equation}
\Delta_\theta(y,x,x')=\beta_0 A_\theta(y,x,x').
\label{eq:implicit-reward-gap}
\end{equation}
The coefficient \(\beta_0\) maps policy log ratios to implicit reward units. The margin scaling coefficient \(\tau\) independently controls the strength margin.

Write \(\sigma(t)=(1+e^{-t})^{-1}\) for the logistic function and
\(\softplus(t)=\log(1+e^t)=-\log\sigma(-t)\) for the softplus function.
Under the unit-scale Bradley--Terry model, ordinary DPO minimizes
\begin{equation}
\mathcal L_{\mathrm{DPO}}(\theta)
=
\E_{Z\sim\D}
\left[
\softplus\!\left(-\beta_0 A_\theta(Z)\right)
\right].
\label{eq:dpo-loss}
\end{equation}
This objective uses the preference direction but not its strength.

\subsection{Heteroskedastic random utility}
\label{sec:rum-prelim}

To allow preference noise to vary across prompts, consider a
heteroskedastic random-utility model (RUM) with latent utilities
\begin{equation}
U_i=r^\star(Y,X_i)+q^\star(Y)\varepsilon_i,
\label{eq:rum}
\end{equation}
for \(i\in\{1,2\}\). Here, \(r^\star\) is a latent reward, \(q^\star(y)>0\) is a
prompt-dependent noise scale, and the errors
\(\varepsilon_1,\varepsilon_2\) are independent standard Gumbel variables conditional on \(W\).
Their difference is standard logistic, so
\begin{equation}
\Prob(U_1>U_2\mid W=(y,x_1,x_2))
=
\sigma\!\left(
\frac{r^\star(y,x_1)-r^\star(y,x_2)}{q^\star(y)}
\right).
\label{eq:heteroskedastic-bt}
\end{equation}
In this binary choice model, \(S=1\) when \(U_1>U_2\), and \(S=-1\)
otherwise. The resulting choice model is the heteroskedastic
Bradley--Terry model \citep{bradley1952rank}; it specifies neither ordinal
strength nor a tie category. The latent scale \(q^\star(y)\) controls choice
variability: for a fixed nonzero reward gap, increasing it moves the
preference probability toward \(1/2\). Binary probabilities identify only
the standardized reward gap. Multiplying \(r^\star(y,\cdot)\) and
\(q^\star(y)\) by the same positive prompt-dependent factor leaves these
probabilities unchanged.

\section{Methodology}
\label{sec:methodology}

Uncertainty-normalized margin DPO (\method{}) combines a strength-dependent
margin with a learned prompt scale. We first define two policy objectives,
\method{}-AO and \method{}-WR, which differ in where the margin enters the
normalization. We then describe how to fit the scale before policy
optimization and introduce \methodln{}, a length-normalized extension of WR.

\subsection{Strength-dependent margins and prompt scaling}
\label{sec:unm-dpo-objective}

\method{} uses a learned positive
relative prompt scale \(q_\psi:\Y\to(0,\infty)\), shared by all
comparisons with prompt \(y\). Both \(q_\psi\) and the latent RUM scale
\(q^\star\) are prompt-dependent, but the fitted scale is not assumed to
equal the latent noise scale. We fit \(q_\psi\) before training the final
policy and hold it fixed during policy optimization
(Section~\ref{sec:global-scale-fitting}).
For an oriented comparison
\(z=(y,x^+,x^-,k)\), define the AO and WR comparison scores
\begin{align}
\Gamma^{\mathrm{AO}}_{\theta,\psi}(z)
&=\frac{\beta_0 A_\theta(z)}{q_\psi(y)}-\tau m(k),
\label{eq:evidence-only}\\
\Gamma^{\mathrm{WR}}_{\theta,\psi}(z)
&=\frac{\beta_0 A_\theta(z)-\tau m(k)}{q_\psi(y)}.
\label{eq:whole-residual}
\end{align}
Each comparison score is the scalar input to the sigmoid in the policy
loss, not itself a probability or an external reward-model score.
The advantage-only formulation (\method{}-AO) divides the policy-induced
reward gap by \(q_\psi(y)\), leaving the strength margin outside the
normalization.  The whole-residual formulation (\method{}-WR) uses the
normalized margin residual: it subtracts the margin from the reward gap
before dividing by \(q_\psi(y)\). The coefficients
\(\beta_0>0\), \(\tau\geq0\), and the strength encoding \(m\) are fixed.

Given the fitted scale \(\widehat q=q_{\widehat\psi}\), each policy
minimizes its corresponding loss
\begin{equation}
\mathcal L_M(\theta;\widehat q)
=\E_{Z\sim\D}\!\left[
\softplus\!\left(-\Gamma^M_{\theta,\widehat\psi}(Z)\right)\right],
\qquad M\in\{\mathrm{AO},\mathrm{WR}\}.
\label{eq:unm-dpo-loss}
\end{equation}
The same scale-fitting objective supports both policy objectives.
At inference, responses are generated by the final policy alone;
the scale network is not needed.

\paragraph{Margin placement and interpretation.}
\label{sec:placement}
\label{sec:threshold-semantics}

The distinction is apparent from the reward gap required for a positive
comparison score:
\begin{equation}
\begin{aligned}
\Gamma^{\mathrm{AO}}_{\theta,\psi}(z)>0
&\iff \beta_0 A_\theta(z)>\tau m(k)q_\psi(y),\\
\Gamma^{\mathrm{WR}}_{\theta,\psi}(z)>0
&\iff \beta_0 A_\theta(z)>\tau m(k).
\end{aligned}
\label{eq:raw-gap-requirement}
\end{equation}
For a positive margin, increasing \(q_\psi(y)\) raises this threshold
under AO but leaves it unchanged under WR.  In WR, \(q_\psi(y)\)
instead controls how sharply the loss penalizes departures from the
threshold.  These are soft penalties, not hard constraints on the policy.
Both reduce to fixed-margin DPO when \(q\equiv1\), and to ordinary
DPO when additionally \(\tau=0\).

The RUM motivates division by a local scale but does not determine where
preference strength enters the objective. A utility difference exceeding
a threshold \(\tau m(k)\) in \emph{raw reward units} motivates WR;
a threshold in \emph{local noise units} motivates AO.
Appendix~\ref{app:rum} derives both cases. These motivating quantities are
probabilities of specified exceedance events, not
\(\Prob(S=s\mid W,K=k)\). We do not equate an observed strength category
with an exceedance event. The objectives are therefore
\emph{margin-exceedance surrogates}, not normalized ordinal likelihoods;
an exact category model also requires category boundaries and any tie or
selection mechanism (Appendix~\ref{app:ordinal}).

\subsection{Learning the prompt scale}
\label{sec:global-scale-fitting}

Since RLHF datasets often contain only a single response pair per prompt, jointly fitting the scale and optimizing policy presents significant challenges (see identifiability conditions in Theorem \ref{thm:wr-margin-identification}). To this end, we separate the scale fitting from the policy optimization, and we first fit the scale using log-ratio differences from freezed auxiliary policies,
which we call \emph{pilots}. Five prompt folds ensure that each comparison
is scored by a pilot that was not fine-tuned on its prompt.

Let \(\mathcal U=\{y_i:i=1,\ldots,N\}\) be the set of unique training
prompts.  A deterministic assignment
\(f:\mathcal U\to\{1,\ldots,5\}\) places all comparisons of each prompt
in the same fold.  For each fold \(j\), fine-tune a fixed-margin pilot
\(\pi_{\theta^{(-j)}}\) on the other four folds, using \(q\equiv1\)
and the same \(\beta_0,\tau,m\).
Let \(\mathcal I\subseteq\{1,\ldots,N\}\) index comparisons whose complete
prompt--response sequences can be scored without truncation under the
length limit specified in Appendix~\ref{app:scale-features}. For
\(i\in\mathcal I\), compute the scaled out-of-fold (OOF) log-ratio difference
\begin{equation}
\widetilde B_i
=\beta_0 A_{\theta^{(-f(y_i))}}(y_i,x_i^+,x_i^-),
\label{eq:oof-pilot-evidence}
\end{equation}
\textcolor{black}{using the same reference policy \(\piref\).
We pool these scaled log-ratio differences from all five held-out folds and,}
for a regularization coefficient \(\lambda_q\geq0\), fit one scale network by minimizing
\begin{equation}
\mathcal L_q^{\mathrm{WR}}(\psi)
=\frac{1}{|\mathcal I|}\sum_{i\in\mathcal I}
\softplus\!\left(
\frac{\tau m(k_i)-\widetilde B_i}{q_\psi(y_i)}\right)
+\frac{\lambda_q}{|\mathcal U|}\sum_{y\in\mathcal U}
[\log q_\psi(y)]^2.
\label{eq:global-wr-scale-fit}
\end{equation}
Only the scale parameters \(\psi\) are optimized in this step. The first term uses
the fixed WR residual \(\widetilde B_i-\tau m(k_i)\), and the second term penalizes
departures from \(q=1\).

\paragraph{Scale network, bounds, and normalization.}
\label{sec:centering}
\label{sec:parameterization}

We constrain the scale network so that its output is bounded, \(q_\psi(y)\in[1/2,2]\), and its geometric mean over unique training prompts equals one. We model the unconstrained scalar primitive as a small MLP that takes fixed prompt features and prompt-domain metadata as input. We construct $q_\psi$ from this primitive so that both constraints hold by construction (see Appendix~\ref{app:scale-parameterization}). The input consists of a 64-dimensional CountSketch of the frozen embedding of prompt $y$, together with prompt metadata such as the domain ID (see Appendix~\ref{app:scale-features}). We fit a single global network $q_\psi$ by minimizing the objective in \Eqref{eq:global-wr-scale-fit}.

After scale fitting, we freeze the network and normalization statistics and train separate AO and WR policies using \Eqref{eq:unm-dpo-loss}. Neither the pilots nor the scale network is required at inference. Appendix~\ref{app:scale-features} discusses the feature construction, regularization coefficient, and optimization settings, while Appendix~\ref{app:alternative-scale-learning} describes alternative estimators.

\paragraph{Interpretation of the fitted scale.}
\label{sec:interpretation}
The fitted \(\widehat q\) is a relative scale of pilot margin residuals. It may reflect annotation uncertainty, task difficulty, or unclear task specification \citep{zhang2025ICML}. However, it may also capture other sources of pilot error that have no clearly interpretable cause. Holding each prompt out of its pilot's fine-tuning data limits in-sample fitting of the log-ratio differences, but does not separate these sources or establish \(\widehat q=q^\star\). The scale plays a role analogous to a temperature \citep{guo2017calibration}. %
Appendix~\ref{app:scale-fitting-signal} analyzes the signed-residual fitting signal and the role of centering.

\subsection{Length-normalized WR}
\label{sec:length-normalized-wr}

To remove the impact of response length and learn the scale in the units of implicit reward per token, we additionally consider normalizing each response's reference-relative log-probability
ratio by its number of tokens. Let \(n(y,x)>0\) be the number of response
tokens whose log probabilities contribute to \(g_\theta(y,x)\), including
any scored end-of-turn token. For a fixed
coefficient \(\beta_{\mathrm{LN}}>0\), define
\begin{align}
A_\theta^{\mathrm{LN}}(z)
&=\frac{g_\theta(y,x^+)}{n(y,x^+)}
 -\frac{g_\theta(y,x^-)}{n(y,x^-)},\\
\Gamma_{\theta,\psi}^{\mathrm{LN\text{-}WR}}(z)
&=\frac{\beta_{\mathrm{LN}}A_\theta^{\mathrm{LN}}(z)-\tau m(k)}
{q_\psi(y)}.
\label{eq:length-normalized-wr}
\end{align}
Minimizing \(\E_{Z\sim\D}\softplus(-\Gamma^{\mathrm{LN\text{-}WR}}_{\theta,\widehat\psi}(Z))\)
with a frozen scale gives \emph{uncertainty- and length-normalized margin
DPO, whole-residual formulation} (\methodln). Unlike reference-free SimPO
\citep{meng2024simpo}, this objective retains the reference policy and the
strength-dependent margin.

For length-normalized scale fitting, retain in \(\mathcal I\) only
comparisons with nonempty scored responses. Evaluate the same fixed pilots
on these held-out comparisons using
\begin{equation}
\widetilde B_i^{\mathrm{LN}}
=\beta_{\mathrm{LN}}A_{\theta^{(-f(y_i))}}^{\mathrm{LN}}(z_i).
\label{eq:oof-pilot-evidence-ln}
\end{equation}
We replace \(\widetilde B_i\) by \(\widetilde B_i^{\mathrm{LN}}\) in
\Eqref{eq:global-wr-scale-fit}, fit \(q_\psi\), and freeze it before
final-policy training. Thus length normalization changes both the scale
fit and the final policy loss.
The coefficient \(\beta_{\mathrm{LN}}\) is distinct from the coefficient \(\beta_0\) as the former calibrates quantities measured per token while the latter per token-sequence (response).
Their numerical calibration does not imply equal KL regularization;
dataset-specific values are given in Appendices~\ref{app:main-experiment-details},
\ref{app:hs2-hpo} and~\ref{app:8b-sensitivity}.

\section{Theoretical analysis}
\label{sec:theoretical-analysis}

Our analysis addresses two questions relevant to prompt-scale learning: when known margins can separate the scale from reward differences in the WR comparison model, and how response-length variability can influence residual-based scale fitting. The first result characterizes a structural identification boundary; the second isolates a possible source of scale distortion. The practical estimator pools information across prompts, and its usefulness is assessed empirically.

\subsection{Structural identification by margin contrasts}
\label{sec:wr-identification}
\label{sec:identification}
\label{sec:relations}

We first ask whether the WR comparison scores in \Eqref{eq:whole-residual}
determine the prompt scale separately from the reward differences.
Here the comparison scores are
treated as given model quantities, not as observed preference directions
or strength labels. Without a margin, multiplying all reward differences
and the scale by the same positive constant leaves these scores unchanged.
Known margins in raw reward units can remove this ambiguity, but nonzero
margins alone are not sufficient.

\textcolor{black}{The WR model is related to the Rao--Kupper extension of
Bradley--Terry, which introduces a tie threshold while fixing the logistic
noise scale \citep{rao1967ties}. Here, the margins are known in fixed reward
units, and the prompt scale is unknown. The characterization below applies
standard comparison-graph arguments \citep{jiang2009statistical} to this
setting.}

\begin{theorem}[WR identification by margin contrasts]
\label{thm:wr-margin-identification}
Fix a prompt \(y\), a finite collection of comparisons
\(z_e=(y,x_e^+,x_e^-,k_e)\), and their known margins \(\tau m(k_e)\).
Allow arbitrary finite response rewards \(r_\theta(y,x)\), independent
of strength, and one free positive scale \(q_\psi(y)\) shared by all
comparisons. Write
\(\Delta_\theta(y,x,x')=r_\theta(y,x)-r_\theta(y,x')\), without
restricting these rewards to a policy parameterization, and suppose
\[
\Gamma^{\mathrm{WR}}_{\theta,\psi}(z_e)
=\frac{\Delta_\theta(y,x_e^+,x_e^-)-\tau m(k_e)}{q_\psi(y)}.
\]
The comparison scores uniquely determine the scale and every compared
reward difference if and only if there is no function \(h\) on responses
satisfying
\[
\tau m(k_e)=h(x_e^+)-h(x_e^-)
\qquad\text{for every comparison }e.
\]
\end{theorem}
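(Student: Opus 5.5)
We compare two parameter configurations that produce the same comparison scores. Suppose rewards \(r,\,r'\) and scales \(q,\,q'>0\) satisfy
\[
\frac{\Delta(x_e^+,x_e^-)-\tau m(k_e)}{q}
=\frac{\Delta'(x_e^+,x_e^-)-\tau m(k_e)}{q'}
\]
for every comparison \(e\). Write \(c=q'/q>0\), \(\Gamma_e\) for the common score, and \(\mu_e=\tau m(k_e)\). Both identified quantities are functions of the scores: \(q\) is one unknown, and each compared difference then satisfies \(\Delta_e=q\Gamma_e+\mu_e\). So identifiability reduces to the single question of whether \(c\) is forced to equal \(1\). If \(c=1\), every compared difference coincides, since \(\Delta_e=q\Gamma_e+\mu_e\). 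The whole proof is therefore about when a \(c\neq 1\) is realizable by some reward function.

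\textbf{Key identity.} Multiplying through by \(q'\) gives \(\Delta'_e=c\Delta_e+(1-c)\mu_e\). A difference family \(\Delta'_e\) is realizable exactly when it comes from a potential \(r'\) on responses. Since \(\Delta_e=r(x_e^+)-r(x_e^-)\) is already a potential difference, realizability of \(\Delta'\) is equivalent to
\[
(1-c)\,\mu_e=\bigl[r'(x_e^+)-c\,r(x_e^+)\bigr]-\bigl[r'(x_e^-)-c\,r(x_e^-)\bigr],
\]
that is, to \((1-c)\mu\) being a potential difference on the comparison graph.

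\textbf{Sufficiency (no such \(h\) implies identification).} Suppose \(c\neq1\). Dividing the identity by \(1-c\) shows that \(h=(r'-cr)/(1-c)\) satisfies \(\mu_e=h(x_e^+)-h(x_e^-)\) for every \(e\). This contradicts the assumption, so \(c=1\).

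\textbf{Necessity (such an \(h\) prevents identification).} Given \(h\) with \(\mu_e=h(x_e^+)-h(x_e^-)\), take any baseline \((r,q)\) and any \(c\neq1\), say \(c=2\). Define \(r'=cr+(1-c)h\) and \(q'=cq\). A direct substitution shows that every score is unchanged, while \(q'\neq q\). Because rewards are arbitrary finite functions, \(r'\) is admissible, so the scale is not determined. The strength labels play no role here, consistent with the hypothesis that rewards are independent of strength.

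The only potentially delicate step is the reduction in sufficiency: one must check that realizability of the new differences is exactly the potential (cycle-consistency) condition on the comparison graph. This holds trivially here because the rewards are unrestricted functions on the finite set of compared responses, so any potential can be extended arbitrarily off the graph. I would also remark that the condition is equivalent to \(\mu\) having nonzero sum around some cycle of the comparison graph, in the sense of \citet{jiang2009statistical}. In particular, a forest comparison graph never identifies the scale, whatever the margins.
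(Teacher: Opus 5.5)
Your proof is correct and follows essentially the same route as the paper: set the scale ratio \(c=q'/q\), derive \(\Delta'_e=c\Delta_e+(1-c)\mu_e\), extract \(h=(r'-cr)/(1-c)\) when \(c\neq1\), and conversely rescale via \(r'=cr+(1-c)h\), \(q'=cq\). Your closing remark misplaces the delicate step: sufficiency needs no realizability check because \(r'\) is given, and the only admissibility point is in necessity, where you already handle it.
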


The function \(h\) would assign one number to each response, regardless
of which comparison contains it. Thus the condition says that the known
margins cannot all be absorbed into response-specific reward offsets. If
such offsets exist, the scale can change without changing any WR
comparison score. If they do not, a combination of comparisons eliminates
the unknown reward differences while retaining a nonzero known margin,
which fixes the scale. For \methodln{}, the same sufficient condition applies to the response
rewards \(r_\theta^{\mathrm{LN}}(y,x)=\beta_{\mathrm{LN}}g_\theta(y,x)/n(y,x)\). Appendix~\ref{app:wr-identification-proof}
provides the proof, an illustrative example, and the detailed scope.

\begin{remark}
Theorem~\ref{thm:wr-margin-identification} is a property of the WR
comparison model, not a guarantee for the estimator in
\Eqref{eq:global-wr-scale-fit}.  Observed strength labels do not supply
the WR comparison scores, and most prompts in our training data have only one
distinct response pair (Appendix~\ref{app:comparison-support}). A single comparison between distinct responses cannot identify the prompt scale. Two or more comparisons for the same prompt can identify it if they satisfy the theorem’s margin-contrast condition. The theorem therefore cannot justify separate unrestricted scale estimates
for those prompts. Our estimator instead pools fixed OOF values to fit one shared
prompt-feature function, as described in Section~\ref{sec:global-scale-fitting}.
Appendix~\ref{app:round1-validation} separately audits whether
strength predicts held-out annotator agreement.
    
\end{remark}

\subsection{Effect of response length on scale fitting}
\label{sec:length-scale-analysis}

Scale fitting compares each pilot log-ratio difference with its strength
margin. The following construction holds the pilots, margins, normalized
log-ratio differences, and mean response length fixed, while changing only
response-length variability.

\begin{proposition}[Length variability can induce prompt-scale variation]
\label{prop:ln-wr-scale-distortion}
Consider a two-prompt population in which \(y_1\) and \(y_2\) each have
probability \(1/2\), and every comparison has the same margin
\(c=\tau m(k)>0\). Let \(n_0>0\) be their common mean response length
and \(V\) a random length fluctuation with the same distribution at both
prompts. To isolate one length factor, suppose both responses in a
comparison from \(y_j\) have token count
\[
L_j=n_0(1+\delta_jV),
\]
where \(0<\delta_2<\delta_1<1\), \(\E[V]=0\), \(|V|\leq1\), and
\(\Prob(V\neq0)>0\).
The mean length is \(n_0\) at both prompts, while \(y_1\) has greater
length variability because \(\delta_1>\delta_2\).

Let \(\widetilde B_j^{\mathrm{LN}}\) be the length-normalized scaled pilot
log-ratio difference from \Eqref{eq:oof-pilot-evidence-ln}, and suppose it
equals \(c\) for every comparison. Match the score coefficients at the mean
length by setting \(\beta_{\mathrm{LN}}=n_0\beta_0\). The corresponding
sequence-level difference from \Eqref{eq:oof-pilot-evidence} is then
\(\widetilde B_j=c(1+\delta_jV)\).

For \(\lambda_q>0\), minimize the expected objective in
\Eqref{eq:global-wr-scale-fit} over two freely varying prompt scales
\(q_j=q(y_j)\in[1/2,2]\), subject to the geometric-mean constraint
\(q_1q_2=1\). Sequence-level fitting has a unique minimizer satisfying
\(q_1>1>q_2\), whereas length-normalized fitting has the
unique solution \(q_1=q_2=1\).
\end{proposition}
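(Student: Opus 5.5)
The plan is to reduce the constrained two-prompt problem to a one-dimensional convex problem in the log scale, then read off the sign of the minimizer from a convexity comparison. Under \(q_1q_2=1\) with \(q_j\in[1/2,2]\), write \(q_1=e^{u}\) and \(q_2=e^{-u}\) with \(u\in[-\log 2,\log 2]\). Taking the expectation of \Eqref{eq:global-wr-scale-fit} over the two equally likely prompts, the regularizer becomes \(\tfrac{\lambda_q}{2}(u^2+u^2)=\lambda_q u^2\), which is symmetric and strictly convex in \(u\).

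\emph{Length-normalized case.} The residual is \(c-\widetilde B_j^{\mathrm{LN}}=0\) for every comparison, so each data term equals \(\softplus(0)=\log 2\), whatever the scale. The objective is therefore \(\log 2+\lambda_q u^2\). Since \(\lambda_q>0\), it has the unique minimizer \(u=0\), that is, \(q_1=q_2=1\).

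\emph{Sequence-level case.} Here the residual is \(c-\widetilde B_j=-c\delta_jV\). Define \(\varphi(s)=\E[\softplus(-sV)]\) for \(s>0\). Using \(\softplus(t)=t/2+\log(2\cosh(t/2))\) and \(\E[V]=0\), the linear part drops out, and
\[
\varphi(s)=\E\bigl[\log\bigl(2\cosh(sV/2)\bigr)\bigr].
\]
Now set \(\psi(t)=\varphi(e^t)\) and \(a_j=\log(c\delta_j)\), so that \(a_1>a_2\). The objective becomes
\[
J(u)=\tfrac12\psi(a_1-u)+\tfrac12\psi(a_2+u)+\lambda_q u^2 .
\]
The key computation is
\[
\psi'(t)=\E\bigl[(sV/2)\tanh(sV/2)\bigr],\qquad s=e^t .
\]
The map \(x\mapsto x\tanh x\) is even and strictly increasing in \(|x|\). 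Because \(\Prob(V\neq0)>0\), it follows that \(\psi'\) is strictly increasing in \(t\), so \(\psi\) is strictly convex. This gives three consequences.
\begin{itemize}
\item \(J\) is strictly convex and continuous on the compact interval \([-\log2,\log2]\), so it has a unique minimizer \(u^\star\). The minimizer may sit at the boundary \(u=\log 2\); the statement allows this.
\item \(J'(0)=\tfrac12\bigl(\psi'(a_2)-\psi'(a_1)\bigr)<0\), because \(\psi'\) is strictly increasing and \(a_2<a_1\). Hence \(u=0\) is not the minimizer, and by convexity the minimizer lies strictly to the right: \(u^\star>0\).
\item As a cross-check, convexity of \(\psi\) makes \(a\mapsto\psi(a+u)-\psi(a-u)\) increasing for \(u>0\). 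Since the regularizer is symmetric, this yields \(J(u)<J(-u)\) for \(u>0\), consistent with \(u^\star>0\).
\end{itemize}
Therefore \(q_1=e^{u^\star}>1>e^{-u^\star}=q_2\).

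The main obstacle I expect is the strict monotonicity of \(\psi'\). It needs \(V\) to be nondegenerate and only \(\E[V]=0\), not symmetry of \(V\). For this reason I would rewrite softplus via \(\cosh\) rather than differentiate the logistic form directly; differentiating directly leaves an odd term whose sign is unclear without symmetry. The hypothesis \(|V|\le1\) together with \(\delta_j<1\) is used only to keep lengths positive and all expectations finite, which justifies differentiating under the expectation.
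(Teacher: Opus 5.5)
Your proposal is correct and follows essentially the same route as the paper's proof. Both reduce to the single log scale \(s=\log q_1\in[-\log 2,\log 2]\), rewrite \(\softplus(-u)=\log 2+\log\cosh(u/2)-u/2\) so that \(\E[V]=0\) removes the linear term, and conclude from strict convexity in the log scale together with the negative derivative at zero. The differences are minor: you get strict convexity from the monotonicity of \(x\tanh x\) in \(|x|\) rather than the explicit second derivative, and you omit the paper's extra bound \(q_1/q_2<\delta_1/\delta_2\), which the statement does not require.
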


Thus sequence-level fitting assigns a larger scale to the prompt with more
variable response lengths despite identical margins and normalized pilot
log-ratio differences. Length normalization removes this particular source
of scale variation. In this construction, the normalized data loss is
constant in the scales; the positive regularization penalty then uniquely
selects \(q_1=q_2=1\). The result does not guarantee noise recovery or
better generation quality.
Appendix~\ref{app:ln-wr-scale-distortion-proof} gives the proof and scope.

\section{Experiments}
\label{sec:experiments}

We compare \method{} and \methodln{} with DPO and related baselines.
The main experiments use HelpSteer3-Preference \citep{wang2025helpsteer3}
with Llama-3.2-1B-Instruct and Llama-3.1-8B-Instruct as initial policies.
Additional evaluations use HelpSteer2-Preference
\citep{wang2025helpsteer2preference} and AlpacaEval \citep{alpaca_eval}.

\subsection{Experimental setup}
\label{sec:experimental-setup}

\paragraph{Methods and training.}
The proposed methods are \method{}-AO, \method{}-WR, and the
length-normalized formulation \methodln{} from Section~\ref{sec:methodology}.
Each uses a WR-fitted global prompt scale, frozen before final-policy
training. We compare with DPO \citep{rafailov2023direct},
fixed-margin DPO, ODPO \citep{amini2024offset}, MMPO
\citep{kim2024mmpo}, \(\beta\)-DPO \citep{wu2024betadpo},
\(\gamma\)-PO applied to DPO \citep{sun2025dynamic}, LogNormal MixDPO
\citep{imai2026mixdpo}, SimPO \citep{meng2024simpo}, and SPO-basic
\citep{sharifnassab2024soft}. Here \emph{fixed-margin DPO} uses the fixed
strength-to-margin rule \(\tau m(k)\) with \(q\equiv1\), not a single
margin shared by all strength levels.

All main-table results are our reproductions, using seed 42 and the same
initial instruction-tuned model within each model-size setting. The main-table HelpSteer3 comparisons
use terminal 150-update final policies. HelpSteer2
budgets differ (Appendix~\ref{app:hs2-hpo}). Baseline settings are inherited from
earlier development or the literature, but they are not exhaustively retuned.
The proposed methods additionally require pilot training, so matched final-policy
budgets do not imply equal total compute. Optimizer settings and scale-fit
details are in Appendix~\ref{app:main-experiment-details}.

\paragraph{Evaluation and metrics.}
Policies generate greedily with at most 512 new tokens. The fixed
Skywork-Reward-V2-Llama-3.1-8B reward model scores candidate and DPO
responses separately. For each dataset and model, candidates on a given
panel share the same saved DPO responses. This trained DPO comparator is distinct from
the reference policy \(\piref\) used in the training objectives.
For \(n\) prompts, we report
\begin{equation}
\operatorname{Win}(\%)=100\frac{n_{\rm win}+n_{\rm tie}/2}{n},
\qquad
\Delta R=\frac1n\sum_{i=1}^{n}(R_i-R_{D,i}),
\label{eq:main-evaluation-metrics}
\end{equation}
where \(R_i\) and \(R_{D,i}\) are candidate and DPO scores. A candidate
wins when \(R_i-R_{D,i}>10^{-6}\); absolute differences at most \(10^{-6}\)
are ties. The length ratio \(L/L_D\) is mean candidate response length
divided by mean DPO response length, measured in characters. Win rates are
not length-controlled. Confidence intervals (CIs) are 95\% percentile
intervals from 10,000 paired prompt-bootstrap resamples, conditional on
the trained checkpoints and unadjusted for selection or multiple comparisons.

\paragraph{Evaluation panels.}
For HelpSteer3, the hyperparameter tuning was performed using a 500-prompt development
panel. We use the 400-prompt evaluation panel to examine ablations and model
variants, and the disjoint 800-prompt evaluation panel for confirmation
evaluation of the same frozen checkpoints.
The two evaluation panels were sampled with domain
stratification from the official validation split. We report the details in 
Appendix~\ref{app:original-protocol-contract}.

\subsection{HelpSteer3}
\label{sec:main-policy-results}
\label{sec:proposed-global-scale}

\begin{table}[t]
\centering
\resulttable
\renewcommand{\arraystretch}{1.0}
\setlength{\tabcolsep}{1.8pt}
\caption{HelpSteer3 with Llama-3.2-1B-Instruct on the 400- and 800-prompt
evaluation panels.}
\label{tab:main-global-comparison}
\begin{tabular*}{\linewidth}{@{\extracolsep{\fill}}>{\raggedright\arraybackslash}p{64pt}rrrrrr@{}}
\toprule
& \multicolumn{3}{c}{\textbf{400 prompts}} & \multicolumn{3}{c}{\textbf{800 prompts}} \\
\cmidrule(lr){2-4}\cmidrule(lr){5-7}
Method & Win (\%) \(\uparrow\) & \(\Delta R\;\uparrow\) & \(L/L_D\)
       & Win (\%) \(\uparrow\) & \(\Delta R\;\uparrow\) & \(L/L_D\) \\
\midrule
DPO (ref.) & 50.00 & 0.000 & 1.000 & 50.00 & 0.000 & 1.000 \\
\addlinespace[3pt]
Fixed-margin DPO & \winci{50.50}{45.63}{55.38} & \winci{+0.341}{-0.284}{+0.988} & 1.074 & \winci{50.62}{47.19}{53.94} & \winci{+0.216}{-0.281}{+0.716} & 1.094 \\
ODPO & \winci{53.88}{49.00}{58.63} & \winci{+0.529}{-0.092}{+1.140} & 1.056 & \winci{51.38}{47.94}{54.81} & \winci{+0.272}{-0.219}{+0.750} & 1.069 \\
MMPO & \winci{44.50}{39.63}{49.25} & \winci{-0.271}{-0.819}{+0.297} & 0.985 & \winci{49.13}{45.75}{52.44} & \winci{-0.289}{-0.703}{+0.119} & 0.981 \\
\(\beta\)-DPO & \winci{25.13}{21.00}{29.50} & \winci{-9.018}{-10.216}{-7.825} & 1.056 & \winci{24.50}{21.56}{27.56} & \winci{-8.606}{-9.410}{-7.777} & 1.050 \\
\(\gamma\)-PO (DPO) & \winci{47.25}{42.50}{52.00} & \winci{+0.119}{-0.396}{+0.636} & 0.981 & \winci{54.56}{51.31}{57.88} & \winci{+0.261}{-0.102}{+0.629} & 0.992 \\
LogNormal MixDPO & \winci{48.38}{43.63}{53.25} & \winci{-0.100}{-0.720}{+0.516} & 0.969 & \winci{49.38}{45.94}{52.88} & \winci{-0.450}{-0.915}{+0.016} & 0.963 \\
SimPO & \winci{54.13}{49.13}{59.00} & \winci{+0.287}{-0.553}{+1.084} & 0.952 & \winci{51.19}{47.69}{54.75} & \winci{+0.031}{-0.569}{+0.662} & 0.929 \\
SPO-basic & \winci{43.75}{39.00}{48.63} & \winci{-1.412}{-2.190}{-0.642} & 1.159 & \winci{43.31}{40.00}{46.75} & \winci{-1.701}{-2.286}{-1.123} & 1.166 \\
\midrule
\textbf{\method{}-AO} & \winci{58.25}{53.50}{63.00} & \winci{+1.040}{+0.384}{+1.693} & 1.064 & \winci{56.94}{53.56}{60.31} & \winci{+0.929}{+0.434}{+1.420} & 1.085 \\
\textbf{\method{}-WR} & \winci{58.75}{54.00}{63.50} & \winci{+1.071}{+0.364}{+1.788} & 1.073 & \winci{55.88}{52.44}{59.25} & \winci{+0.690}{+0.200}{+1.183} & 1.089 \\
\textbf{\methodln{}} & \winci{62.63}{57.88}{67.25} & \winci{+2.194}{+1.502}{+2.900} & 1.007 & \winci{61.50}{58.13}{64.81} & \winci{+1.803}{+1.310}{+2.296} & 0.998 \\
\bottomrule
\end{tabular*}
\tablenote{Brackets give conditional 95\% CIs for win rate and reward difference. Both panels
use the frozen judge, decoding and 10,000-resample paired-bootstrap protocol.}
\end{table}

\paragraph{Llama-3.2-1B-Instruct.}
{\color{black}
On both panels, all three proposed methods exceed the reproduced baselines
in win-rate and reward-difference point estimates
(Table~\ref{tab:main-global-comparison}). \methodln{} leads at 62.63\%
and 61.50\% wins on 400 and 800 prompts, respectively, with nearly DPO's
mean response length. All three methods' conditional CIs exclude
50\% wins and zero reward difference on both panels.
WR's point estimates exceed AO's on 400 prompts, but AO's exceed WR's
on 800, so their ordering is not established. Both sequence-level methods
produce modestly longer responses than DPO, which may contribute to
judge preferences.
}
\paragraph{Llama-3.1-8B-Instruct.}
{\color{black}
Table~\ref{tab:matched-8b} retains each selected checkpoint across panels
against the same 150-update 8B DPO comparator. \methodln{} leads in
win-rate and reward-difference point estimates, with shorter responses
than DPO. Selected during the 400-prompt variant evaluation, it achieves
68.00\% wins there and 65.31\% in 800-prompt confirmation, versus
SimPO's 61.00\% and 58.94\%, respectively. On the same 800-prompt panel, DPO and ULNM-DPO-WR also improve over the initial instruction-tuned model, achieving tie-adjusted win rates of 59.44\% and 70.13\% (Appendix \ref{app:initial_model_comparison}).
Appendix~\ref{app:alternative-scale-learning} reports additional scale
estimators, configurations, and training-budget results.

}

\begin{table}[t]
\centering
\resulttable
\renewcommand{\arraystretch}{1.0}
\setlength{\tabcolsep}{1.8pt}
\caption{HelpSteer3 with Llama-3.1-8B-Instruct on the 400- and 800-prompt
evaluation panels.}
\label{tab:matched-8b}
\begin{tabular*}{\linewidth}{@{\extracolsep{\fill}}>{\raggedright\arraybackslash}p{64pt}rrrrrr@{}}
\toprule
& \multicolumn{3}{c}{\textbf{400 prompts}} & \multicolumn{3}{c}{\textbf{800 prompts}} \\
\cmidrule(lr){2-4}\cmidrule(lr){5-7}
Method & Win (\%) \(\uparrow\) & \(\Delta R\;\uparrow\) & \(L/L_D\)
       & Win (\%) \(\uparrow\) & \(\Delta R\;\uparrow\) & \(L/L_D\) \\
\midrule
DPO (ref.) & 50.00 & 0.000 & 1.000 & 50.00 & 0.000 & 1.000 \\
\addlinespace[3pt]
Fixed-margin DPO & \winci{58.13}{53.25}{62.88} & \winci{+1.120}{+0.453}{+1.799} & 1.024 & \winci{54.75}{51.38}{58.19} & \winci{+0.322}{-0.174}{+0.816} & 1.004 \\
ODPO & \winci{55.50}{50.63}{60.25} & \winci{+1.042}{+0.412}{+1.692} & 1.012 & \winci{52.81}{49.38}{56.19} & \winci{+0.073}{-0.407}{+0.546} & 1.016 \\
MMPO & \winci{48.88}{44.13}{53.63} & \winci{-0.287}{-0.876}{+0.306} & 1.036 & \winci{45.75}{42.44}{49.19} & \winci{-0.691}{-1.111}{-0.266} & 1.023 \\
$\beta$-DPO & \winci{8.63}{6.00}{11.38} & \winci{-25.450}{-26.980}{-23.917} & 2.266 & \winci{8.69}{6.81}{10.69} & \winci{-25.533}{-26.604}{-24.434} & 2.246 \\
$\gamma$-PO (DPO) & \winci{53.63}{48.75}{58.38} & \winci{+0.423}{-0.171}{+1.019} & 1.021 & \winci{49.81}{46.44}{53.19} & \winci{+0.102}{-0.286}{+0.490} & 1.007 \\
LogNormal MixDPO & \winci{50.00}{45.13}{54.75} & \winci{+0.035}{-0.577}{+0.660} & 1.025 & \winci{48.06}{44.81}{51.44} & \winci{-0.384}{-0.819}{+0.051} & 1.016 \\
SimPO & \winci{61.00}{56.13}{65.63} & \winci{+1.702}{+0.819}{+2.571} & 0.988 & \winci{58.94}{55.56}{62.38} & \winci{+1.225}{+0.572}{+1.880} & 0.971 \\
SPO-basic & \winci{55.88}{51.00}{60.75} & \winci{+0.834}{+0.118}{+1.552} & 1.097 & \winci{54.19}{50.81}{57.56} & \winci{+0.599}{+0.081}{+1.117} & 1.093 \\
\midrule
\textbf{\method{}-AO} & \winci{54.50}{49.50}{59.38} & \winci{+0.667}{-0.024}{+1.367} & 1.036 & \winci{53.31}{49.87}{56.69} & \winci{+0.104}{-0.416}{+0.620} & 1.031 \\
\textbf{\method{}-WR} & \winci{59.38}{54.50}{64.25} & \winci{+1.245}{+0.511}{+1.978} & 1.025 & \winci{55.81}{52.38}{59.19} & \winci{+0.405}{-0.065}{+0.882} & 1.018 \\

\textbf{\methodln{}} & \winci{68.00}{63.38}{72.38} & \winci{+2.372}{+1.661}{+3.080} & 0.985 & \winci{65.31}{62.00}{68.56} & \winci{+1.796}{+1.241}{+2.356} & 0.964 \\
\bottomrule
\end{tabular*}
\end{table}

\subsection{Additional evaluations}
\label{sec:additional-evaluations}

\paragraph{AlpacaEval.}
We additionally evaluate the frozen 8B policies on 805 AlpacaEval
instructions, comparing each policy with GPT-4-Turbo reference answers
using a GPT-4.1 judge. \methodln{} attains a length-controlled win rate
of 21.62\%, compared with 16.39\% for DPO and 15.30\% for SimPO.
\textcolor{black}{Appendix~\ref{app:alpaca-8b} reports the complete LC win-rate
comparisons.}
\paragraph{HelpSteer2.}
\label{sec:helpsteer2-transfer}

On a 224-prompt evaluation panel
with Llama-3.2-1B-Instruct, \methodln{} attains 64.73\% wins against DPO
and \(\Delta R=+1.978\), compared with SimPO's 62.95\% and \(+1.749\).
These shared-reference results do not establish a significant difference
between the two methods. Table~\ref{tab:hs2-hpo-main} in
Appendix~\ref{app:hs2-hpo} gives the complete comparison, alongside
the  AO/WR sensitivity analysis.

To examine the contribution of learned prompt scaling, we compare \methodln{} against its constant-scale control, $(q\equiv1)$. 
On the 400- and 800-prompt HelpSteer3 panels, the learned-scale policy achieves 68.00\% and 65.31\% wins against DPO, respectively, compared with 62.00\% and 56.56\% for the control (Table \ref{tab:ulnm-8b-q1-control}). On the 800-prompt, \methodln{} achieves a direct tie-adjusted win rate of 57.69\% against the control (Appendix \ref{app:sec:constant-scale}).
On AlpacaEval, their length-controlled win rates against the common GPT-4-Turbo reference are 21.62\% and 17.83\%, respectively. These higher point estimates support the usefulness of learned scaling in the evaluated configurations. 

\FloatBarrier
\section{Conclusion}
\label{sec:conclusion}

We introduce \method{}, which combines strength-dependent margins with a learned prompt scale to adapt preference optimization to variation across prompts. A bounded scale network is fitted to pooled out-of-fold pilot log-ratio differences and then frozen during policy training. Its length-normalized extension, \methodln{}, additionally normalizes each response's reference-relative log-probability ratio by its token count. Our theoretical analysis characterizes how response length can affect scale fitting and when known margins identify the scale from given WR comparison scores, while distinguishing these results from recovery of latent annotation noise.

Experiments on HelpSteer2 and HelpSteer3 with the Skywork reward model, together with AlpacaEval evaluations using a GPT-4.1 judge, show improvements over DPO and several related baselines under the evaluated configurations. The strongest results are obtained with \methodln{}, supporting the combination of learned prompt scaling, strength-dependent margins, and length normalization. Fitting the prompt scale requires additional pilot training, whose cost grows with the number of folds. Our main comparisons retain the same final-policy update budget as DPO, and neither the pilots nor the scale network is required at inference. These results motivate further investigation of prompt-dependent scaling, including methods that reduce its training overhead and comparisons at matched total computational cost.

\bibliography{iclr2027_conference}
\bibliographystyle{iclr2027_conference}

\newpage
\appendix

\section{Random-Utility Derivations}
\label{app:rum}

Fix \(w=(y,x_1,x_2)\) and a candidate direction \(s\in\{-1,1\}\),
chosen before observing the utilities. Write
\(\Delta_s^\star(w)=s\{r^\star(y,x_1)-r^\star(y,x_2)\}\).
For a nonnegative raw-unit threshold \(c_k(y)\), define
\begin{equation}
E_{s,k}=\{s(U_1-U_2)>c_k(y)\}.
\label{eq:raw-threshold-event}
\end{equation}
The model in \Eqref{eq:rum} gives
\[
s(U_1-U_2)=\Delta_s^\star(w)+q^\star(y)\eta_s,
\qquad
\eta_s=s(\varepsilon_1-\varepsilon_2)\sim\operatorname{Logistic}(0,1).
\]
For any fixed nonnegative threshold \(c_k(y)\), symmetry of the logistic
CDF yields
\begin{align}
\Prob(E_{s,k}\mid W=w)
&=\Prob\!\left(\eta_s>
\frac{c_k(y)-\Delta_s^\star(w)}{q^\star(y)}\,\middle|\,W=w\right)\nonumber\\
&=\sigma\!\left(\frac{\Delta_s^\star(w)-c_k(y)}{q^\star(y)}\right).
\label{eq:raw-threshold-link}
\end{align}
Setting \(s=1\) and \(c_k(y)=0\) recovers the binary comparison model.
The raw threshold \(c_k(y)=\tau m(k)\) places both the reward gap and
the margin inside the scale normalization, motivating WR. Alternatively,
a threshold of \(\tau m(k)\) in local noise units sets
\begin{equation}
c_k(y)=\tau m(k)q^\star(y),
\label{eq:standardized-threshold}
\end{equation}
so that \(\Prob(E_{s,k}\mid W=w)
=\sigma(\Delta_s^\star(w)/q^\star(y)-\tau m(k))\), motivating AO.
Thus the same numerical encoding \(\tau m(k)\) has different units in
the two formulations: raw reward units for WR and local noise units for AO.
Substituting the
observed-orientation policy gap \(\beta_0A_\theta(z)\) and fitted scale
\(q_\psi(y)\) gives the surrogates in Section~\ref{sec:unm-dpo-objective}.
This substitution motivates the losses; it does not identify the observed
ordinal categories with the exceedance events. In particular, for a
positive threshold the two directional exceedance events exclude an
interval around zero and are not exhaustive. Their probabilities are not
\(\Prob(S=s\mid W,K=k)\). An exact ordinal likelihood requires the
category boundaries and any tie or selection mechanism, as in
Appendix~\ref{app:ordinal}.

\section{Scale Fitting and Structural Identification}
\label{app:unm-dpo-proofs}

Let \(\nu_{\mathcal U}\) be the uniform distribution over unique training
prompts. The scale constraints in Section~\ref{sec:centering} are
\begin{gather}
\E_{Y\sim\nu_{\mathcal U}}[\log q_\psi(Y)]=0,
\label{eq:geometric-centering}
\\
\frac{1}{2}\leq q_\psi(y)\leq2.
\label{eq:scale-bounds}
\end{gather}
These are optimization constraints, not guarantees of noise identification.

\subsection{Optimization rationale for scale learning}
\label{app:scale-fitting-signal}

The fitting signal is the discrepancy between the scaled OOF log-ratio
difference \(\widetilde B_i\) and the strength margin. Define the fixed residual
\begin{equation}
d_i=\widetilde B_i-\tau m(k_i),\qquad i\in\mathcal I.
\label{eq:pilot-margin-residual}
\end{equation}
For a single row, differentiation with respect to the log scale gives
\begin{equation}
\frac{\partial\,\softplus(-d_i/q)}{\partial\log q}
=\frac{d_i}{q}\,\sigma\!\left(-\frac{d_i}{q}\right).
\label{eq:scale-residual-gradient}
\end{equation}
Thus a positive residual favors a smaller scale, whereas a negative
residual favors a larger scale.  The shared network, centering, and
regularizer couple these row-level tendencies.  In particular, the scale
does not simply increase with the absolute residual or with preference
strength.

At \(q\equiv1\), an admissible infinitesimal log-scale direction \(v\)
must satisfy \(\E_{\nu_{\mathcal U}}v(Y)=0\). Its derivative in
\Eqref{eq:global-wr-scale-fit} is
\[
\frac{1}{|\mathcal I|}\sum_{i\in\mathcal I}
d_i\sigma(-d_i)v(y_i),
\]
because the quadratic log-scale penalty has zero derivative there.
The loss weights comparisons, whereas centering weights unique prompts;
only directions realizable by the shared network are available.
A negative derivative gives a local decrease, but a zero derivative
does not establish optimality. This finite-sample identity needs neither
independent residuals nor comparison cycles and also holds at \(\tau=0\).
Useful pooling requires predictive prompt features; the identity implies
neither noise recovery nor improved generation.

\paragraph{Why centering is not identification.}
Bounds and shrinkage prevent extreme scales but do not ensure convexity or
uniqueness.  For example, consider two prompts with one identical residual
\(d=0.55\) each and \(q_1=e^t,q_2=e^{-t}\).  With
\(\lambda_q=0.01\), the centered objective satisfies
\[
J''(0)=d\sigma(-d)\{d\sigma(d)-1\}+2\lambda_q<0.
\]
Thus even equal residuals need not make the constant scale a local minimum.
The same local direction is available through the centered-tanh map when
its scalar outputs can vary oppositely.  Learned scale variation alone
therefore does not establish underlying annotation-noise heterogeneity.

{
\subsection{Length-induced scale variation: proof and scope}
\label{app:ln-wr-scale-distortion-proof}

\begin{proof}[Proof of Proposition~\ref{prop:ln-wr-scale-distortion}]
\emph{1. Lengths determine the sequence-level residuals.}
For a comparison \((X_j^+,X_j^-)\) sampled at prompt \(y_j\), write
\(A_j=A_{\theta^{(-f(y_j))}}(y_j,X_j^+,X_j^-)\) for its fixed pilot's
unscaled log-ratio difference. Because both responses in this pair have
the same length \(L_j=n_0(1+\delta_jV)\),
\(\widetilde B_j^{\mathrm{LN}}
=\beta_{\mathrm{LN}}A_j/L_j\). With
\(\beta_{\mathrm{LN}}=n_0\beta_0\) and
\(\widetilde B_j^{\mathrm{LN}}=c\), this implies
\(\widetilde B_j=\beta_0A_j=c(1+\delta_jV)\).
The sequence-level residual \(\widetilde B_j-c\) therefore has the
distribution of \(c\delta_jV\); the length-normalized residual is zero.
This is the expected counterpart of \Eqref{eq:global-wr-scale-fit},
with equal prompt and comparison weights.
For the freely varying scales \(q_j=q(y_j)\), its sequence-level form is
\[
\mathcal J_{\mathrm{seq}}(q_1,q_2)
=\frac12\sum_{j=1}^2
\E\!\left[
\softplus\!\left(\frac{c-\widetilde B_j}{q_j}\right)
\right]
+\frac{\lambda_q}{2}\sum_{j=1}^2(\log q_j)^2.
\]

\emph{2. Reduce scale fitting to a strictly convex problem.}
Set \(s=\log q_1\). Centering gives \(q_2=e^{-s}\), and the scale bounds
are equivalent to \(s\in[-\log2,\log2]\). For length-normalized fitting,
\[
\mathcal J_{\mathrm{LN}}(e^s,e^{-s})=\log2+\lambda_qs^2.
\]
The data term is constant, so the positive penalty uniquely selects
\(s=0\), or \(q_1=q_2=1\).
For sequence-level fitting, use
\(\softplus(-u)=\log2+\log\cosh(u/2)-u/2\).
The linear term vanishes in expectation since \(\E[V]=0\);
symmetry of \(V\) is not required. Define
\[
H(v)=\E[\log\cosh(e^vV)],\qquad
a_j=\log(c\delta_j/2).
\]
Then
\[
F_{\lambda_q}(s)
:=\mathcal J_{\mathrm{seq}}(e^s,e^{-s})
=\log2+\tfrac12\{H(a_1-s)+H(a_2+s)\}+\lambda_qs^2.
\]
Boundedness of \(V\) permits differentiation under the expectation.
Writing \(t=e^vV\) inside that expectation gives
\[
H'(v)=\E[t\tanh t],\qquad
H''(v)=\E\!\left[t\tanh t+\frac{t^2}{\cosh^2t}\right]>0.
\]
Strict positivity follows from \(\Prob(V\neq0)>0\). Hence
\(F_{\lambda_q}\) is strictly convex and has a unique constrained minimizer.

\emph{3. Locate the minimizer and bound the scale ratio.}
Since \(a_1>a_2\) and \(H'\) is strictly increasing,
\[
F_{\lambda_q}'(0)=\tfrac12\{H'(a_2)-H'(a_1)\}<0.
\]
Let \(s_*=\tfrac12\log(\delta_1/\delta_2)>0\). At \(s_*\),
\(a_1-s_*=a_2+s_*\), so the two data-loss derivatives cancel:
\[
F_{\lambda_q}'(s_*)=2\lambda_qs_*>0.
\]
The unique minimizer of \(F_{\lambda_q}\) on \(\mathbb R\), denoted
\(r_{\lambda_q}\), thus lies strictly between \(0\) and \(s_*\).
Restricting it to the feasible interval gives
\(\widehat s=\min\{r_{\lambda_q},\log2\}\), so
\(0<\widehat s<s_*\). Substituting
\(q_1=e^{\widehat s},q_2=e^{-\widehat s}\) proves both
\(q_1>1>q_2\) and \(1<q_1/q_2<\delta_1/\delta_2\).
\end{proof}

\paragraph{Role of regularization.}
At \(\lambda_q=0\), the sequence-level optimum has
\(q_1/q_2=\min\{\delta_1/\delta_2,4\}\).
A positive penalty shrinks the unconstrained scale ratio toward one;
the constrained ratio may remain at its upper bound of four.
For length-normalized fitting, the data loss is
constant: the positive penalty selects unit scales, while every feasible
scale pair is optimal without it.

\paragraph{Connection to policy training.}
Let \(b=\beta_0A_\theta(z)\) for WR, or
\(b=\beta_{\mathrm{LN}}A_\theta^{\mathrm{LN}}(z)\) for length-normalized WR.
For a frozen scale \(q\),
\[
\left.\frac{\partial}{\partial b}
\softplus\!\left(\frac{c-b}{q}\right)\right|_{b=c}
=-\frac{1}{2q}.
\]
A larger scale therefore reduces the loss's sensitivity to \(b\) at
the margin. This is one factor in the policy gradient, not its full
parameter-space magnitude: the latter also depends on the gradient of
\(b\) itself.

\paragraph{Assumptions and scope.}
Using the same distribution of \(V\) at both prompts isolates a change in amplitude:
both prompts have mean response length \(n_0\), and their length standard
deviations have ratio \(\delta_1/\delta_2\), without requiring symmetry. Every
positive-probability outcome must yield an integer, without rounding
in the score identity.
An asymmetric example is \(n_0=8\), \((\delta_1,\delta_2)=(1/2,1/4)\),
and \(\Prob(V=-1/2)=2/3=1-\Prob(V=1)\).

Equal within-pair lengths, \(\widetilde B_j^{\mathrm{LN}}=c\), and the
coefficient matching are analytical assumptions, not empirical claims
about all training comparisons. In particular, being exactly at the margin
is essential to this result: it does not assert that greater length
variance always increases the fitted scale for arbitrary residuals.
At \(c=0\), both quantities vanish under this construction and this
particular effect disappears, without ruling out other length effects.

The optimization is over scale values, not network parameters. It does
not guarantee that a shared neural network realizes the minimizing pair
or that training finds it. The negative derivative at \(q_1=q_2=1\)
still gives a descent direction whenever the network can increase one
log scale and decrease the other by the same amount.
No latent noise model is assumed, and \(q_1=q_2=1\) is not identified
as the true annotation-noise scale. The proposition isolates a possible
source of scale variation; it establishes neither its prevalence in our
datasets nor a generation-quality guarantee.
}

\subsection{Proof of Theorem~\ref{thm:wr-margin-identification}}
\label{app:wr-identification-proof}

\begin{proof}
Fix \(y\) and abbreviate \(r(x)=r_\theta(y,x)\), \(q=q_\psi(y)\),
\(c_e=\tau m(k_e)\), and \(t_e=\Gamma^{\mathrm{WR}}_{\theta,\psi}(z_e)\).
Suppose \((r,q)\) and \((\widetilde r,\widetilde q)\) produce the same
scores, and set \(a=\widetilde q/q>0\). Equality of the scores gives
\[
\widetilde r(x_e^+)-\widetilde r(x_e^-)
=a\bigl[r(x_e^+)-r(x_e^-)\bigr]+(1-a)c_e
\qquad\text{for every }e.
\]
If \(a\neq1\), the function
\[
h(x)=\frac{\widetilde r(x)-a r(x)}{1-a}
\]
satisfies \(h(x_e^+)-h(x_e^-)=c_e\) for every comparison, contradicting
the stated condition. Hence \(a=1\), so \(\widetilde q=q\) and every
compared reward difference is the same under both parameter choices.

Conversely, suppose such a function \(h\) exists. For any \(a>0\), set
\[
\widetilde q=aq,\qquad
\widetilde r(x)=a r(x)+(1-a)h(x).
\]
Since \(h(x_e^+)-h(x_e^-)=c_e\), these parameters satisfy
\[
\frac{\widetilde r(x_e^+)-\widetilde r(x_e^-)-c_e}{\widetilde q}
=\frac{r(x_e^+)-r(x_e^-)-c_e}{q}=t_e.
\]
Taking \(a\neq1\) changes the scale without changing any score, proving
nonidentification of the scale and hence of the scale and compared
reward differences jointly.
\end{proof}

\paragraph{A three-response example.}
Suppose the same prompt has ordered comparisons \((a,b)\), \((b,c)\),
and \((a,c)\), with known margins \(c_{ab},c_{bc},c_{ac}\).
Any response rewards satisfy
\[
\Delta_\theta(y,a,b)+\Delta_\theta(y,b,c)-\Delta_\theta(y,a,c)=0.
\]
If \(c_{ab}+c_{bc}-c_{ac}\neq0\), the margins do not satisfy this
cancellation. The same combination of WR comparison scores therefore
equals \(-(c_{ab}+c_{bc}-c_{ac})/q_\psi(y)\), determining the scale.
This example illustrates the condition; the theorem does not require
this particular arrangement of comparisons.

\paragraph{Cycle interpretation and recovery.}
Fix a prompt \(y\). Represent its comparisons by a finite multigraph:
each distinct response is a vertex, and each comparison
\(z_e=(y,x_e^+,x_e^-,k_e)\) is an edge oriented from \(x_e^+\) to
\(x_e^-\). Multiple comparisons may connect the same responses.
Let \(n_v,n_e\) be the numbers of vertices and edges, and let
\(D\in\R^{n_e\times n_v}\) be the incidence matrix. Its row for edge
\(e\) is the coordinate vector of \(x_e^+\) minus that of \(x_e^-\);
it is therefore zero for a self-comparison \(x_e^+=x_e^-\).
Write \(r_i=r_\theta(y,x_i)\),
\(c_e=\tau m(k_e)\), \(t_e=\Gamma^{\mathrm{WR}}_{\theta,\psi}(z_e)\),
and \(q=q_\psi(y)\).  In this local vector notation,
\((Dr)_e=\Delta_\theta(y,x_e^+,x_e^-)\), so
\begin{equation}
t=\frac{Dr-c}{q}.
\label{eq:wr-graph-index}
\end{equation}
The condition in the body is exactly \(c\notin\operatorname{im}(D)\):
if \(c=Dh\), the entries of \(h\) assign one value to each response
whose differences reproduce every margin. We use the standard
comparison-graph decomposition \citep{jiang2009statistical} to express
this condition through cancellation of reward differences.

A cycle \(\mathcal C\) is a closed chain of distinct comparison edges
in the underlying undirected multigraph. An edge can be traversed in
either direction: set \(\omega_e=+1\) for traversal from \(x_e^+\)
to \(x_e^-\), and \(-1\) in reverse. Thus a cycle need not express
contradictory preferences. Parallel edges are allowed; a self-comparison
is a one-edge cycle with \(\omega_e=+1\).
For such a self-comparison, \(t_e=-c_e/q\);
if \(c_e\neq0\), its given comparison score alone determines
\(q=-c_e/t_e\). A zero-margin self-loop contributes no scale information.
This convention includes self-comparisons in the theorem and the argument
below.
The fundamental theorem of linear algebra gives
\(\operatorname{im}(D)^\perp=\ker(D^\top)\).  Therefore
\(c\notin\operatorname{im}(D)\) if and only if some
\(s\in\ker(D^\top)\) satisfies \(s^\top c\neq0\).
The kernel is spanned by cycle vectors: a cycle \(\mathcal C\) has
\(s_e=\omega_e\) on its edges and zero elsewhere.  Thus such an \(s\)
exists if and only if at least one cycle satisfies
\begin{equation}
\sum_{e\in\mathcal C}\omega_e\tau m(k_e)\neq0.
\label{eq:wr-margin-condition}
\end{equation}
For its cycle vector \(s\),
\begin{equation}
s^\top t=\frac{s^\top Dr-s^\top c}{q}
=-\frac{s^\top c}{q}.
\end{equation}
Its right-hand side is nonzero, giving \(q=-s^\top c/(s^\top t)\).
In the notation of the theorem, the scale and compared reward differences
are therefore recovered by
\begin{equation}
\begin{aligned}
q_\psi(y)&=-\frac{\sum_{e\in\mathcal C}\omega_e\tau m(k_e)}
{\sum_{e\in\mathcal C}\omega_e\Gamma^{\mathrm{WR}}_{\theta,\psi}(z_e)},\\
\Delta_\theta(y,x_e^+,x_e^-)
&=q_\psi(y)\Gamma^{\mathrm{WR}}_{\theta,\psi}(z_e)+\tau m(k_e).
\end{aligned}
\label{eq:wr-cycle-recovery}
\end{equation}
Equivalently, \(Dr=qt+c\). Any alternative parameters producing the same \(t\) must
therefore have the same scale and edge gaps.  Finally,
\(Dr=D\widetilde r\) implies that \(\widetilde r-r\) is constant on
each connected component.

\paragraph{Scope and model classes.}
At \(\tau=0\), the function \(h\equiv0\) satisfies the margin equations,
so identification fails in the unrestricted model.
Sufficiency is preserved under any restriction of the reward or scale
classes, including the policy-induced log-ratio rewards.  Necessity above
allows arbitrary node scores and a free positive scale at the fixed prompt.
Bounds, geometric centering across prompts, or architectural restrictions
may exclude some converse transformations.  The theorem does not characterize
necessity under those additional constraints.  It also requires fixed margins
in raw units and a common scale across each prompt's comparison edges.
The theorem concerns WR with a prompt-only scale, not AO or pair-specific
scales.

\paragraph{Strength contrasts and support.}
For a concrete example, fix \(y,x^+,x^-\) and consider
\(z_j=(y,x^+,x^-,k_j)\), \(j\in\{1,2\}\), with distinct raw margins
\(\tau m(k_1)\neq\tau m(k_2)\). The reward gap is the same in both
comparison scores because it does not depend on strength. Subtracting
the scores eliminates this gap, giving
\begin{equation}
q_\psi(y)
=\frac{\tau[m(k_2)-m(k_1)]}
{\Gamma^{\mathrm{WR}}_{\theta,\psi}(z_1)
-\Gamma^{\mathrm{WR}}_{\theta,\psi}(z_2)}.
\label{eq:wr-strength-contrast}
\end{equation}
The margin difference supplies a known quantity that determines the scale;
substitution into either score recovers the reward gap. These parallel
edges have cycle vector \(s=(1,-1)\), so the formula is a special case
of \Eqref{eq:wr-cycle-recovery}. On a triangle with edges
\((1,2),(2,3),(1,3)\), the condition is
\(c_{12}+c_{23}-c_{13}\neq0\).  Thus strength variation can provide an
anchor but is not necessary for every graph: equal nonzero margins satisfy
this triangle condition too.  On a forest, \(D\) is surjective onto the
edge space, so no margin vector can satisfy the condition.  No empirical
cycle-coverage assumption is asserted for the policy datasets.

\paragraph{Conditioning of inverse-scale recovery.}
Let \(P\) project orthogonally onto \(\ker(D^\top)\), and set
\(\kappa=\|Pc\|_2>0\) and \(\alpha=1/q\).
Projecting \Eqref{eq:wr-graph-index} gives \(Pt=-\alpha Pc\).
For perturbed scores \(\widehat t=t+\varepsilon\), define
\(\widehat\alpha=-(Pc)^\top\widehat t/\kappa^2\). Then
Cauchy--Schwarz gives
\begin{equation}
|\widehat\alpha-\alpha|
=\frac{|(Pc)^\top P\varepsilon|}{\kappa^2}
\leq\frac{\|P\varepsilon\|_2}{\kappa}.
\label{eq:wr-scale-stability}
\end{equation}
Nearly vanishing margin contrasts therefore make recovery ill-conditioned.
This deterministic comparison-score bound is not a finite-sample guarantee;
recovering those scores from probabilities can itself be unstable near
zero and one.

\subsection{Comparison support in the policy-training data}
\label{app:comparison-support}

We audited the frozen, non-tie training rows before the additional
pilot-scoring encoding and length mask. Exact prompt and unordered
response-pair identities were used, and repeated rows with the same orientation and strength were deduplicated for graph analysis.  A separate implementation reproduced the aggregate counts from the same checksum-bound training files. No validation, locked-test, or model-output records were used.

HelpSteer2 contains 6,766 rows over 6,765 prompts. Of these prompts,
6,764 have one distinct response pair and one has two disconnected pairs;
there are no ordinary response-graph cycles. HelpSteer3 contains 36,299
rows over 22,756 prompts: 22,112 have one distinct pair and 644 have
multiple pairs.  Only one prompt has an ordinary response-graph cycle.
Excluding self-comparisons, the nonzero margin-contrast condition holds
algebraically at zero HS2 prompts and 12 HS3 prompts. The latter consist
of 11 parallel-pair contrasts and one ordinary cycle. Each dataset also
contains one self-comparison, excluded from these non-self counts.
These records are self-loops under the convention in Appendix~\ref{app:wr-identification-proof}, not comparisons between distinct responses. The audit does not remove them from the policy-training data, and the counts above describe non-self comparison support only.

These are properties of the retained training representation, not counts
of all annotations in the official releases.  In particular, multiple
annotators judging the same pair do not create new response nodes, and
repeating an identical edge does not create a nonzero margin contrast.
Nor do realized labels reveal the conditional WR comparison scores assumed in
Theorem~\ref{thm:wr-margin-identification}.  The graph result therefore
cannot serve as an empirical noise-identification guarantee for these
experiments.  Our shared scale estimator instead pools residual information
across prompts through the fixed feature representation.

\section{Relation to an Exact Ordinal Model}
\label{app:ordinal}

An ordinal likelihood models the full signed annotation
\(C\in\{0,\ldots,J-1\}\), not only its unsigned strength \(K\).
For \(W=w=(y,x_1,x_2)\), let \(V=\Delta(w)+q(y)\xi\), where
\(\Delta\) is a finite location, \(q>0\) is a finite prompt scale,
and \(\xi\mid W=w\) has a common known, strictly increasing CDF \(F\).
These are auxiliary measurement-model parameters, not the fitted policy
gap and scale. Shared ordered cutpoints
\(-\infty=c_0<c_1<\cdots<c_{J-1}<c_J=+\infty\) define category \(j\)
by \(c_j<V\leq c_{j+1}\), giving
\begin{equation}
p_{\Delta,q,c}(j\mid W=w)
=F\!\left(\frac{c_{j+1}-\Delta(w)}{q(y)}\right)
-F\!\left(\frac{c_j-\Delta(w)}{q(y)}\right).
\label{eq:ordinal-pmf-appendix}
\end{equation}
These probabilities sum to one. The cutpoints are category boundaries,
not the operational margins \(\tau m(k)\). A tie category or selection
of non-tie observations must be represented when applicable; freely
prompt-dependent cutpoint spacings can absorb promptwise scale changes.
Our WR scale fit uses fixed pilot log-ratio differences and one observed
margin, not this full ordinal likelihood. The distinction supplies no
consistency or noise-recovery guarantee for the policy-training procedure.

\section{Audit of Preference Strength}
\label{app:round1-validation}

We test whether preference strength predicts held-out annotator agreement
in HelpSteer3. The audit's data partitions are distinct from the
policy-evaluation panels in Section~\ref{sec:experiments}. It does not
establish that the scale fitted in our policy experiments recovers true
annotation noise.

\subsection{Held-out-annotator audit of preference strength}
\label{app:strength-semantics}

\paragraph{Purpose and data.}
We test whether stronger judgments from some annotators predict greater
directional agreement with another annotator whose judgment is withheld.
This tests the information carried by strength labels, not their calibration
as confidence probabilities.

We deduplicate the union of the released HelpSteer3 training and validation
files, then hash full conversational contexts into a domain-stratified,
prompt-disjoint \(80/10/10\) partition (seed 260820): 19,816 training,
2,476 development, and 2,476 locked-test prompts. All records for a prompt
remain together; locked-test labels are excluded before parsing and are not
analyzed. The frozen source is \texttt{nvidia/HelpSteer3},
configuration \texttt{preference}, revision
\texttt{f6d145777bcbde96137596340fab89793acd1031}.

\paragraph{Leave-one-annotator-out design.}
For pair record \(j\), let \(n_j\) be its number of judgments and
\(V_{j\ell}\in\{-3,-2,-1,1,2,3\}\) the signed strength reported by
annotator \(\ell\). For each held-out annotator \(h\), retain the observation
only if \(n_j\geq3\) and all remaining annotators agree in direction.
Writing their common direction as \(d_{j,-h}\), define
\begin{equation}
e_{j,-h}
=
\left|\frac{1}{n_j-1}\sum_{\ell\neq h}V_{j\ell}\right|,
\qquad
I_{jh}
=
\mathbf{1}\!\left\{\operatorname{sign}(V_{jh})=d_{j,-h}\right\}.
\label{eq:strength-audit-statistic}
\end{equation}
Neither the evidence \(e_{j,-h}\) nor the reference direction uses the
held-out judgment. Prespecified evidence bins are low \([1,1.5]\),
middle \((1.5,2.5)\), and high \([2.5,3]\).

We estimate the high-minus-low agreement contrast using 5,000
prompt-clustered bootstrap replicates. A one-sided association test uses
10,000 permutations of record-level evidence within HelpSteer3 domains.
The prespecified success criterion requires positive association in both
training and development, development \(p\leq0.05\), at least 50 development
held-out votes in each extreme bin, and a positive development contrast
whose clustered 95\% interval excludes zero.

{\color{black}
For the permutation test, evidence and agreement are first averaged over
eligible held-out votes within each response-pair record. The statistic
is the pooled Pearson correlation between these record-level averages,
without domain centering; evidence is
permuted among records within each domain, with agreement held fixed.
The corrected one-sided value is \((1+b)/(10{,}001)\), where \(b\) counts
permutations whose correlation is at least the observed correlation
(numerical tolerance \(10^{-15}\)). Permutation seeds are 260823 and
260833 for training and development; the corresponding prompt-clustered
bootstrap seeds are 260822 and 260832. Records, rather than individual
votes, are the permutation units; prompts are the bootstrap clusters.
The permutation test assumes that record-level evidence is exchangeable
within each domain under the null. Domain-restricted permutations do not
preserve dependence between records sharing a prompt, so their \(p\)-value
is conditional on this additional assumption. The high-minus-low agreement
interval instead resamples whole prompts and accounts for within-prompt
dependence.
}

\paragraph{Results.}
The development partition contains 2,559 records from 2,476 prompts;
2,467 records yield 7,165 eligible held-out votes.
Table~\ref{tab:strength-audit} reports their binwise agreement.

\begin{table}[htbp]
\centering
\resulttable
\caption{Held-out-annotator directional agreement on the HelpSteer3
development partition. The last column reports descriptive Wilson intervals;
the high-minus-low contrast in the text uses prompt-clustered inference.}
\label{tab:strength-audit}
\begin{tabular*}{\linewidth}{@{\extracolsep{\fill}}lrrrr@{}}
\toprule
Evidence bin & Mean evidence & Held-out votes & Prompts & Agreement (95\% CI) \\
\midrule
Low \([1,1.5]\)       & \(1.229\) & \(3{,}236\) & \(1{,}390\) & \(96.35\%\;[95.65,96.95]\) \\
Middle \((1.5,2.5)\)  & \(2.000\) & \(1{,}864\) & \(1{,}070\) & \(100.00\%\;[99.79,100.00]\) \\
High \([2.5,3]\)      & \(2.804\) & \(2{,}065\) & \(894\)     & \(100.00\%\;[99.81,100.00]\) \\
\bottomrule
\end{tabular*}
\tablenote{Only observations with unanimous direction among the remaining
annotators are included. Prompts may contribute to more than one evidence bin.}
\end{table}

High-evidence agreement exceeds low-evidence agreement by 3.65 percentage
points (proportion difference \(0.03646\); clustered 95\% CI
\([0.02984,0.04315]\)). The pooled record-level evidence--agreement
correlation is \(0.2855\), with domain-restricted permutation
\(p=9.999\times10^{-5}\), the minimum
attainable corrected value for 10,000 permutations.
The training partition shows the same direction: 57,566 eligible votes,
correlation \(0.3027\), and contrast \(0.04199\) with clustered interval
\([0.03954,0.04451]\). These statistics meet the prespecified numerical
thresholds, subject to the permutation assumption above. The positive
prompt-clustered contrast provides the cluster-aware support for the
agreement result.
A secondary comparison reusing the annotations that formed the aggregate
label gives 100\% agreement at every non-tie strength; it is not an
independent check and is not used for the conclusion.

\paragraph{Interpretation.}
Stronger evidence predicts more reproducible directions \emph{conditional
on agreement among the remaining annotators}. The 100\% middle- and high-bin
rates therefore do not imply universal agreement in HelpSteer3.
The audit supports the relevance of strength labels, but establishes neither
equal cardinal spacing nor confidence calibration. It also cannot distinguish
a larger raw reward gap from a larger gap relative to noise: neither the
true gap nor the true prompt scale is observed. Thus it is consistent with,
but does not identify, the standardized-threshold model in
\Eqref{eq:standardized-threshold}. Any use of the real locked-test labels
requires a separately frozen confirmatory protocol.

\FloatBarrier
\section{Experimental Details}
\label{app:experimental-details}
\label{app:main-experiment-details}

This appendix specifies the training and evaluation settings for the main
comparisons. Appendix~\ref{app:alternative-scale-learning} describes alternative
scale estimators, Appendix~\ref{app:hyperparameter-sensitivity} reports
configuration sensitivity. All comparisons are against the
dataset-, model-, seed- and panel-specific DPO reference stated for the
experiment. Fixed-margin DPO is an additional baseline, not the reference
used to calculate win rates.

\subsection{Datasets, panels, and evaluation}
\label{app:original-protocol-contract}

The HelpSteer3 policy-training pool contains 36,299 eligible non-tie
comparisons; the official validation split contributes 1,920 eligible
non-tie rows. 
Earlier hyperparameter tuning used a 500-prompt development panel. We sampled 400- and 800-prompt evaluation panels ( using seeds 260834 and 260835) from the official validation split, ensuring that both were disjoint from each other and from the 500-prompt development panel.
The 400-prompt panel supports ablations and model-variant examinations;
the 800-prompt panel provides confirmation evaluation of frozen checkpoints.

HelpSteer2 contains 6,766 eligible training comparisons and 448 official
validation prompts. We sample a 224-prompt evaluation panel. 

{\color{black}
\paragraph{Dataset versions and preprocessing.}
We use the \texttt{preference} configurations of
\path{nvidia/HelpSteer3} at revision
\path{f6d145777bcbde96137596340fab89793acd1031} and
\path{nvidia/HelpSteer2} at revision
\path{990b2711a36180dd19d9c94b8627844866f8982a}.
The released signed label is \texttt{overall\_preference} for HS3 and
\texttt{preference\_strength} for HS2. Negative labels prefer response 1;
positive labels prefer response 2. We set \(k\) to the absolute label,
retain \(k\in\{1,2,3\}\), and exclude ties from preference training.
The data adapter stores normalized strength \(k/3\); the main margin and
scale-fitting objectives use \(k\), not \(k/3\).
Individual annotator judgments are not inputs to this main training pipeline.
Retained comparison records are not deduplicated.
HS3 has 38,459 training records before tie removal and 22,756 unique
prompts afterward; HS2 has 8,677 records before removal and 6,765 unique
training prompts afterward. Prompt-only HS2 evaluation retains all 448
validation prompts, including those whose original comparisons are tied.

\paragraph{Prompt identity and fold assignment.}
We preserve message contents without case folding or whitespace cleanup.
HS2 role markers are parsed into alternating user/assistant turns; an HS3
string prompt becomes one user turn, and existing conversation lists are
retained. Prompt identity is the SHA-256 hash of the UTF-8 JSON serialization
of the conversation, with sorted keys, unescaped Unicode, and default JSON
spaces. To assign a prompt to a fold, we hash
\texttt{identified-k1-fold-v1:260836:}\emph{prompt-hash}, interpret the
first eight digest bytes as an unsigned big-endian integer, and take its
remainder modulo five. All comparisons sharing a prompt therefore enter
the same fold. Each auxiliary policy is trained on the other four folds;
its held-out log-ratio differences contribute once to the pooled OOF values.
}

Generation is greedy with at most 512 new tokens. The judge is
Skywork-Reward-V2-Llama-3.1-8B, with its model revision fixed throughout.\footnote{Hugging Face model
\path{Skywork/Skywork-Reward-V2-Llama-3.1-8B}; revision
\texttt{cba2f842f3f1af2f1b2f0d35e794d789976390c5}.}
DPO responses are reused within each matched panel comparison.
The primary metrics are tie-adjusted win rate and mean candidate-minus-DPO
reward difference, as defined in \Eqref{eq:main-evaluation-metrics}.
A score difference of magnitude at most \(10^{-6}\) is a tie.
\(L/L_D\) is the ratio of mean response character counts; it is not a
length-controlled win rate. Confidence intervals are conditional 95\%
percentile intervals from 10,000 paired prompt-bootstrap resamples,
unadjusted for selection and multiple comparisons. Shared-reference
results do not constitute direct tests between candidate methods.

\subsection{Policy training and baseline settings}
\label{app:baseline-settings}

Comparisons share the same frozen initialization within each model setting:
\path{Llama-3.2-1B-Instruct} for 1B and
\path{Llama-3.1-8B-Instruct} for 8B. The common HelpSteer3 1B contract uses
seed 42, 150 final-policy updates, global batch size 128, microbatch size 1,
bfloat16, maximum sequence length 4,096, activation checkpointing, and
AdamW with learning rate \(10^{-6}\), weight decay 0.1,
\((\beta_1,\beta_2)=(0.9,0.98)\), \(\epsilon=10^{-5}\), and 20 warmup
updates. Reported configurations use terminal checkpoints; evaluations
of the same configuration on two panels use the same checkpoint.
HelpSteer2 budgets and deviations from these settings are stated below.
Pilot training adds compute beyond the final-policy budget.

{\color{black}
\paragraph{Initialization and optimization.}
The initial model snapshots use the following revisions:
\begin{table}[H]
\centering
\resulttable
\caption{Frozen initial-model revisions.}
\label{tab:model-revisions}
\begin{tabular*}{\linewidth}{@{\extracolsep{\fill}}ll@{}}
\toprule
Model & Frozen revision\\
\midrule
Llama-3.2-1B-Instruct & \texttt{9213176726f574b556790deb65791e0c5aa438b6}\\
Llama-3.1-8B-Instruct & \texttt{0e9e39f249a16976918f6564b8830bc894c89659}\\
\bottomrule
\end{tabular*}
\end{table}
For reference-relative objectives, \(\pi_{\rm ref}\) is a frozen copy of
this initial instruction-tuned model, not the trained DPO evaluation
opponent. Training updates all policy parameters; no LoRA adapters are
used. Gradient norms are clipped at one. Learning rates increase
linearly from 0.1 times the stated rate to that rate over 20 updates and
then remain constant. The loader traverses shuffled data without
replacement within an epoch, drops an incomplete final batch, and
reshuffles when another epoch is needed. Sequence packing and dynamic
batching are disabled. HS3 1B runs include intermediate validation-loss
checks, but the reported checkpoints are the terminal checkpoints at the
specified update budgets, not checkpoints selected by those losses.
}

\paragraph{Training tokenization and overlength pairs.}
Policy and auxiliary-policy training apply the 4,096-token limit separately
to each complete chat-formatted context--response sequence, including
template and termination tokens. There are no separate prompt and response
budgets. Both branches are retained without truncation when they fit.
If either branch exceeds the limit, the entire pair receives zero loss
weight; short dummy message prefixes replace both branches for batching,
rather than training on shortened responses. Masked pairs remain loader
and update-schedule slots but are excluded from the valid-pair loss
denominator.

For fixed-margin DPO and the sequence-level proposed objectives,
\(\beta_0=0.05\), \(\tau=1\), and \(m(k)=k\), except in the stated
margin-sensitivity study. Fixed-margin DPO uses \(q\equiv1\) with this
strength-dependent margin.

\paragraph{Reproduced baseline objectives.}
For comparison \(i\), write \(A_i=A_\theta(z_i)\) for the
reference-relative log-probability difference defined in the body.
Let \(P_i\) be the preferred-minus-rejected difference of policy
log-probability sums, without reference subtraction, and let \(\bar P_i\)
be the difference after dividing each response's sum by its own scored-token
count. All sums use the training mask above. Define binary logistic
cross-entropy by
\[
\operatorname{CE}(t,z)=-t\log\sigma(z)-(1-t)\log\sigma(-z).
\]
The reproduced per-pair losses are listed below; they are averaged over
valid pairs, except for the additional selection in \(\beta\)-DPO.
\begin{table}[H]
\centering
\resulttable
\caption{Implemented baseline losses for one preference pair.}
\label{tab:baseline-objectives}
\begin{tabular*}{\linewidth}{@{\extracolsep{\fill}}lp{0.71\linewidth}@{}}
\toprule
Method & Implemented per-pair loss\\
\midrule
DPO & \(\softplus(-\beta_0 A_i)\), with \(\beta_0=0.05\).\\
ODPO & \(\softplus(0.75k_i-\beta_0 A_i)\).\\
MMPO & \(\operatorname{CE}(\sigma(2.2k_i),\beta_0 A_i)\).\\
SimPO & \(\softplus(1.375-2.5\bar P_i)\).\\
SPO-basic & \(\softplus(-0.01P_i)/0.01\).\\
\(\beta\)-DPO & \(\softplus(-\beta_t A_i)\) on the selected batch subset.\\
\(\gamma\)-PO (DPO) & \(\softplus(\delta_i-\beta_0 A_i)\).\\
LogNormal MixDPO & \(-\log[16^{-1}\sum_{s=1}^{16}\sigma(b_{is}A_i)]\).\\
\bottomrule
\end{tabular*}
\end{table}
Here \(\beta_t\), \(\delta_i\), and \(b_{is}\) are defined below.
ODPO and MMPO use the raw strength \(k_i\), not the adapter's stored
\(k_i/3\). The other listed baselines use the observed preference direction
but do not use strength as a margin, weight, or target. SimPO is
reference-free and length-averaged, with learning rate \(10^{-6}\);
SPO-basic is reference-free and sequence-summed, with neither strength
weighting nor its optional global-output KL term. These losses have no
additional SFT term.

\paragraph{Batch-adaptive coefficients.}
All statistics below use detached pre-update log-probability differences
over the global batch of valid pairs, not individual microbatches.
For \(\beta\)-DPO, let \(\bar A_t\) and \(s_t\) be the batch mean and
sample standard deviation (zero for a singleton batch), and initialize
\(m_0=0,v_0=1\). Update
\[
m_t=0.9m_{t-1}+0.1\bar A_t,\qquad
v_t=0.9v_{t-1}+0.1s_t.
\]
From a batch of \(n\) valid pairs, sample
\(\max(1,\lfloor0.8n\rfloor)\) pairs without replacement, with weights
\(\exp\{-\tfrac12[(A_i-m_t)/\max(v_t,10^{-10})]^2\}\).
For the selected set \(S_t\), use
\[
\beta_t=\max\{10^{-3},\,0.05[1+0.2(\bar A_{S_t}-m_t)]\},
\]
where \(\bar A_{S_t}\) is its mean difference. Only selected pairs
contribute to the loss and its normalization. Validation-loss checks use
all valid pairs without updating these running statistics.

For \(\gamma\)-PO (DPO), combine the current scaled differences
\(a_j=\beta_0 A_j\) with a uniform sample without replacement from a
FIFO queue, up to 256 values in total. The queue holds at most 2,048
values and starts with eight independent \(\mathcal N(0,1)\) draws.
If the combined set has size \(n_c\), initialize \(p_j^{(0)}=1/n_c\).
With base margin \(\gamma_0=0.02\) and regularization coefficient 10,
the implemented mirror updates are, for \(r=0,\ldots,19\),
\begin{align*}
\gamma_j^{(r)}&=\gamma_0 n_c p_j^{(r)},\\
h_j^{(r)}&=10[1+\log(n_c(p_j^{(r)}+10^{-10}))]
 +\gamma_0\sigma(\gamma_j^{(r)}-a_j),\\
p_j^{(r+1)}&=\frac{p_j^{(r)}\exp(-0.05h_j^{(r)})}
{\sum_l p_l^{(r)}\exp(-0.05h_l^{(r)})}.
\end{align*}
The frozen implementation returns the current-batch components of
\(\delta_j=\gamma_j^{(19)}-\gamma_0\), computed before the last
probability update; it subtracts this deviation, not the raw
\(\gamma_j^{(19)}\), in the policy loss. The current scaled differences
are then appended to the queue during training; validation-loss checks do
not modify the queue. These coefficients are detached during
policy optimization.

For LogNormal MixDPO, draw independent \(\epsilon_{is}\sim\mathcal N(0,1)\)
and set \(b_{is}=\exp[\mu+\softplus(\rho)\epsilon_{is}]\).
The two global scalars \(\mu,\rho\) start at
\(\mu=\log(0.1)\), \(\softplus(\rho)=0.6\).
Once per batch, update them using the mixture loss above with detached
\(A_i\): AdamW with learning rate \(10^{-4}\), zero weight decay,
betas \((0.9,0.999)\), epsilon \(10^{-8}\), and gradient-norm clipping
at one. The policy update uses the same 16 coefficient draws sampled
before that auxiliary update, detached from \(\mu,\rho\).
The logarithm is outside the Monte Carlo average; validation-loss checks
also use 16 draws. For policy seed \(s\), the \(\gamma\)-queue initialization
and training-time sampling use Python random seeds \(s+91\) and \(s+92\);
the training-time \(\beta\)-subset and mixture draws use PyTorch seeds
\(s+93\) and \(s+501\), respectively. Validation-loss checks use separate
\(\gamma\)-sampling and mixture seeds \(s+502\) and \(s+1501\).

{
\paragraph{500-prompt hyperparameter tuning.}
At Llama-3.2-1B-Instruct tuning stage evaluated the five margin
candidates and comparator configurations in
Table~\ref{tab:early-500-tuning}. All candidates used policy seed 42 and
the same deterministic, domain-stratified 500-prompt development panel.
For each method, selection maximized mean paired Skywork reward difference
against the same seed-42 DPO policy. The margin sweep used the earlier AO
logistic argument \(\beta_0 A_\theta/q-\tau k\), with fixed
\(\beta_0=0.05\), unchanged bounds \(q\in[0.5,2]\), and the same
centered scale parameterization.

\begin{table}[H]
\centering
\resulttable
\caption{Hyperparameter search grids and selected values from the earlier
500-prompt tuning stage.}
\label{tab:early-500-tuning}
\begin{tabular*}{\linewidth}{@{\extracolsep{\fill}}p{0.28\linewidth}p{0.46\linewidth}p{0.22\linewidth}@{}}
\toprule
Method and parameter & Candidate values & Selected / fixed \\
\midrule
Earlier AO: margin \(\tau\)
& \(\{1/3,\,0.5,\,0.75,\,1,\,1.5\}\) & \(1\) \\
ODPO: strength multiplier
& \(\{1/3,\,0.5,\,0.75,\,1\}\) & \(0.75\) \\
MMPO: target slope \(\gamma\)
& \(\{0.3,\,0.7,\,1.1,\,2.2\}\) & \(2.2\) \\
\(\beta\)-DPO: adaptation coefficient \(\alpha\)
& \(\{0.2,\,0.6,\,1\}\) & \(0.2\) \\
\(\gamma\)-PO (DPO): base margin \(\gamma_0\)
& \(\{0.02,\,0.05,\,0.20,\,0.40\}\) & \(0.02\) \\
SimPO
& \(\beta\in\{2.5,10\}\), \(\gamma/\beta\in\{0.3,0.55\}\),\newline LR \(\in\{1,2\}\times10^{-6}\) & \(2.5\), \(0.55\), \(10^{-6}\) \\
SPO-basic & \(\alpha\in\{0.001,0.01\}\), LR \(\in\{2,5\}\times10^{-6}\)  & \(0.01\),\(5\times10^{-6}\) \\
UNM versions & \(\tau\in\{0.5,1,1.5\}\), LR \(\in\{1,2,5\}\times10^{-6}\)  & \(1\),\(10^{-6}\) \\
\bottomrule
\end{tabular*}

\end{table}

The MMPO target is \(\sigma(\gamma k_i)\); the \(\beta\)-DPO coefficient
\(\alpha\) occupies the position of \(0.2\) in the batch-adaptive
\(\beta_t\) formula above. Its retention rate remained \(0.8\), and the
\(\gamma\)-PO regularization coefficient remained \(10\). Moreover, we tested the best hyperparameters suggested by each method's own paper to be in a valid range.
}

{\color{black}
\paragraph{Auxiliary-policy training budgets.}
All five auxiliary policies in each dataset/model setting use the
unnormalized fixed-margin objective with \(\beta_0=0.05\), \(\tau=1\),
\(q=1\), learning rate \(10^{-6}\), and seed 42.
Both HS3 1B and HS2 1B use 150 updates per auxiliary policy.
HS3 8B uses one epoch over each four-fold complement, giving
227, 227, 227, 227, and 225 updates for folds 0--4.
HS2 8B uses two epochs, or 84 updates per auxiliary policy.
These budgets are separate from the final-policy budgets and from the
150 full-batch updates used to fit the scale network. Length-normalized
OOF rescoring reuses these same auxiliary policies.

Our methods require additional computation for auxiliary-policy training and scale fitting, adding training cost beyond the final-policy budget. However, these pilot checkpoints can be reused across scale-fitting and policy-objective variants within the same dataset and model setting. At deployment or test, only the final policy is needed; neither the pilots nor the scale network introduces additional inference cost. Moreover, the direct 800-prompt comparison supports the benefit of learned scaling: it achieves 57.69\% wins against the constant-scale control (95\% CI: [54.31, 61.00]).  
Using two or three folds instead of five would require fewer auxiliary policies and reduce pilot-training cost under the same per-pilot update budget. However, this potentially affects scale estimation and policy quality. We use five folds in the reported pilot-based experiments and leave this cost–quality tradeoff to future work.

}

\subsection{Scale fitting and features}
\label{app:scale-features}

The main sequence-level AO and WR methods use the WR scale-fitting
objective in \Eqref{eq:global-wr-scale-fit}. Five fixed-margin auxiliary
policies with distinct held-out prompt folds (fold seed 260836) supply
the pooled OOF values \(\widetilde B_i\). Each is trained on the other
four folds, so their training subsets overlap.
For HS3 1B, 35,945 rows are valid for pilot scoring; 354 are masked.
The data-fit term averages eligible rows, whereas normalization and
regularization average all 22,756 unique training prompts.
For HS3 1B, one prompt-only scale is fitted and frozen for both final policy losses.
The scale head is not evaluated during generation.

Prompt features are a 64-dimensional CountSketch of frozen prompt embeddings, standardized over unique training prompts.
{\color{black}The original HS3 1B head includes an additive learned scalar
domain term; HS2 1B and native 8B heads use a single constant domain ID.}
Width-16 GELU heads start at \(q=1\).
Response features, strength labels, and annotation entropy are not input
features of the main prompt head. Strength enters the fitting loss.
The optimizer is AdamW with learning rate \(10^{-3}\),
\((0.9,0.999)\) betas, \(\epsilon=10^{-8}\), gradient-norm clipping at 1,
and zero weight decay. Scale fitting uses seed 42, 150 full-batch updates,
\(\lambda_q=0.01\), and the centered parameterization below, with
\(q\in[0.5,2]\).
Only the pilot log-ratio differences are out of fold; the global scale
network is fitted to all pooled values. Architecture and calibration differences for
alternative estimators are given in Appendix~\ref{app:alternative-scale-learning}.

{\color{black}
\paragraph{Feature construction.}
The frozen encoder is the corresponding initial 1B or 8B model at the
revision specified above. We render its chat template with an assistant
prefix and fixed date \texttt{26 Jul 2024}, use left padding and left
truncation to 2,048 tokens, and embed batches of 32 in bfloat16.
The feature is the final-layer hidden state at the last prompt token,
converted to float32; its dimension is 2,048 for 1B and 4,096 for 8B.
For encoder coordinate \(j=0,\ldots,d-1\), CountSketch hashes
\texttt{pair-prompt-v1:}\(j\) with SHA-256. The first four bytes, read
big-endian modulo 64, give the bucket; the low bit of byte five selects
sign \(+1\) when set and \(-1\) otherwise. Signed bucket sums are divided
by \(\sqrt{\max(1,d/64)}\), where \(d\) is the encoder dimension.
Standardization uses featurewise population means and standard deviations
over unique training prompts, with standard deviations floored at
\(10^{-6}\); these statistics are frozen for evaluation.
For HS3 1B training prompts, the canonical domain is the modal training-record
domain, preferring records with nonempty responses, deduplicating only
for this domain vote, and breaking ties lexicographically; domain IDs
follow sorted domain names. Validation-only prompts use validation-record
domain metadata. HS3 1B \methodln{} reuses this feature/domain
mapping. The transferred 1B scale in the HS3 8B WR configuration also
retains its 1B encoder and domain mapping, rather than using 8B features.

\paragraph{OOF scoring mask.}
Auxiliary-policy and reference log probabilities are computed on complete
prompt--response sequences without truncation. A comparison is excluded
from scale fitting if either sequence exceeds 4,096 tokens. Prompt tokens
must form an exact prefix; only the response suffix, including its
template terminator but excluding padding, contributes to the advantage.
Length-normalized scoring also masks empty response suffixes and divides
each response's log-probability sum by its own suffix-token count.
Masked rows retain their identities and an explicit invalid flag; they
do not contribute to the likelihood. Their prompts remain in the
training-prompt population used for centering and regularization.
}

\paragraph{Bounded parameterization and centering.}
\label{app:scale-parameterization}
Let \(a_\psi(y)\) be the scalar network output and
\(\bar a_\psi=|\mathcal U|^{-1}\sum_{y\in\mathcal U}a_\psi(y)\).
For a bound parameter \(b>0\), define
\begin{align}
u_\psi(y)&=b\tanh\!\left(\frac{a_\psi(y)-\bar a_\psi}{b}\right),
\label{eq:bounded-raw-scale}\\
\bar u_\psi&=\frac{1}{|\mathcal U|}\sum_{y\in\mathcal U}u_\psi(y),
\label{eq:scale-center}\\
\log q_\psi(y)&=u_\psi(y)-\bar u_\psi.
\label{eq:scale-log-param}
\end{align}
At every scale-fitting update, we recompute both means over \(\mathcal U\)
and differentiate through them. This enforces
\Eqref{eq:geometric-centering} exactly. Since
\(u_\psi\) and \(\bar u_\psi\) lie in \([-b,b]\), setting
\(b=\log(2)/2\) guarantees the bounds \(q_\psi(y)\in[0.5,2]\).
For the sequence-level objectives, the coefficient on \(A_\theta\) is
\begin{equation}
\beta_{\mathrm{eff}}(y)=\frac{\beta_0}{q_\psi(y)},
\qquad
\E_{Y\sim\nu_{\mathcal U}}[\log\beta_{\mathrm{eff}}(Y)]=\log\beta_0.
\label{eq:effective-beta}
\end{equation}
It is determined by \(q_\psi\), not an additional learned parameter.
The same relation holds with \(\beta_{\mathrm{LN}}\) for the
length-normalized objective. After fitting, both means are frozen and
reused on new prompts without recentering. The bounds continue to hold,
but the geometric-mean constraint applies only to the training prompt
population. These are optimization constraints, not identification
guarantees for latent annotation noise.

\subsection{Length-normalized configurations}
\label{app:normalized-implementation}

For \methodln{}, the existing unnormalized fixed-margin pilots are rescored
on their held-out folds using reference-relative mean response-token
log probabilities. A fresh global scale is fitted with the WR objective
and then frozen. Pilot policies themselves are not length-normalized.
We determine \(\beta_{\mathrm{LN}}\) from training-response lengths, without
using evaluation-panel results. For each eligible training comparison
\(i\in\mathcal I\), let \(n(y_i,x_i^+)\) and \(n(y_i,x_i^-)\) count the
preferred and rejected response tokens contributing to their log-probability
ratios. These counts include the response terminator and exclude prompt
and padding tokens; comparisons with empty response suffixes or either
prompt--response sequence longer than 4,096 tokens are excluded.
Using the sequence-level coefficient \(\beta_0=0.05\), we set
\[
L_{\mathrm{ref}}
=\operatorname{median}_{i\in\mathcal I}
\left[\frac{n(y_i,x_i^+)+n(y_i,x_i^-)}{2}\right],
\qquad
\beta_{\mathrm{LN}}=\beta_0 L_{\mathrm{ref}}.
\]
Thus we average the two response lengths within each comparison, then take
the median across eligible training comparisons. This gives
\(\beta_{\mathrm{LN}}=0.05\times364.5=18.225\) for HS3 and
\(0.05\times272=13.6\) for HS2. Multiplication by \(L_{\mathrm{ref}}\)
compensates for dividing log-probability ratios by response length: the
sequence-level and normalized reward differences coincide when both
responses have exactly \(L_{\mathrm{ref}}\) tokens. For varying lengths,
this is only a magnitude calibration, not an equivalent KL penalty.
\begin{table}[H]
\centering
\resulttable
\caption{Training-length calibration of the normalized score coefficient.}
\label{tab:normalized-coefficients}
\begin{tabular*}{\linewidth}{@{\extracolsep{\fill}}lrr@{}}
\toprule
Setting & Median tokens & \(\beta_{\mathrm{LN}}\) \\
\midrule
HS3 1B & 364.5 & 18.225 \\
HS2 1B & 272 & 13.6 \\
HS3 8B & 364.5 & 18.225 \\
\bottomrule
\end{tabular*}
\end{table}
All three use seed 42 and \(\tau=1\).  The HS2 configuration retains the
learning rate and budget of the displayed sequence-level WR policy.
The HS3 1B scale fit uses the same 35,945 eligible OOF rows and 22,756
normalization prompts as its sequence-level counterparts, but fitting to
\(\widetilde B_i^{\mathrm{LN}}\) produces a distinct scale. Changing score normalization and
refitting the scale together does not isolate their separate effects.

\subsection{8B scale provenance and reference policies}
\label{app:8b-sensitivity}

In the main HS3 8B table, \method{}-AO uses a WR-fitted scale based on
five 8B pilots. \method{}-WR instead uses the frozen WR-fitted scale
from the HS3 1B pilots. Both are final 8B policies with prompt-only scale
inputs, learning rate \(10^{-6}\), seed 42, and 150 updates.
Their DPO reference also uses 150 updates. Separate policy and scale
networks do not make the fitted scale independent of the pilot policies.
The \methodln{} configuration uses a newly fitted scale from the
length-normalized OOF values \(\widetilde B_i^{\mathrm{LN}}\) of the 8B pilots.

\paragraph{Verification and reporting.}
Reported outputs undergo terminal integrity checks and independent
statistics replay before release. These checks validate computation,
not test-set independence. Win rates are shown to two decimals and
reward differences and length ratios to three; selection uses unrounded
values.

{\color{black}
\subsection{Software and compute allocation}
\label{app:software-compute}
The implementation uses NeMo-RL with PyTorch DTensor/FSDP2, Hugging Face
Transformers tokenization and model loading, and activation checkpointing.
The later checksum-pinned AO/WR, \methodln{}, and 8B requests use container
\path{nvcr.io/nvidia/nemo-rl@sha256:336aa41391a99e01d018d17d327107fd6d1023ad4b2812c8d8c913dee95fd3f2}.
The original HS3 1B baseline launcher defaults to
\path{nvcr.io/nvidia/nemo-rl:v0.6.0}; its frozen request does not bind a
container digest, so the later digest is not asserted for those earlier runs.
The selected 1B policy runs request one GPU on one node; selected 8B
policy runs request four GPUs on one node, with four-way data parallelism,
tensor parallelism one, and 32 gradient-accumulation microsteps at global
batch size 128. The one-GPU runs use 128 accumulation microsteps at the
same global batch and microbatch size one. Frozen requests record source,
model, data, feature, OOF-value, scale, and checkpoint checksums.
Evaluation verifies these bindings, exact prompt order and DPO-output
reuse, then independently recomputes summary statistics from terminal
outputs before admitting a result. These are integrity checks, not
additional model-selection or training steps.  The main training setup used NVIDIA A100-SXM4 GPUs with 80 GB of memory per GPU, allocating one GPU per 1B-model training job and four GPUs per 8B-model training job. An additional 8B sensitivity experiment used four NVIDIA H200 GPUs.
}

\FloatBarrier
\section{Alternative Scale-Learning Formulations}
\label{app:alternative-scale-learning}
\label{app:original-protocol-results}

These experiments compare ways to learn the scale in the AO and WR policy
losses, using HS3 and Llama-3.2-1B-Instruct. They include alternative
supervision, input features, normalization, and fitting schedules.
Because several ingredients change together, they are comparisons of
complete estimators, not controlled single-factor ablations.
Appendix~\ref{app:experimental-details} gives the shared evaluation protocol.

\subsection{Estimator definitions}
\label{app:original-protocol-selection}
\label{app:alternative-original-selection}

Let \(B_i=\beta_0 A_\theta(y_i,x_i^+,x_i^-)\) and \(c_i=\tau m(k_i)\),
as in the main objectives. The AO and WR logistic arguments are
\(B_i/q_i-c_i\) and \((B_i-c_i)/q_i\), respectively.
Write \(\chi_i\) for the scale-head input: \(\chi_i=y_i\) for a prompt-only
head and \(\chi_i=(y_i,\{x_i^+,x_i^-\})\) for a symmetric pair head;
\(f(i)\) denotes the prompt fold. Fixed-margin DPO sets \(q_i=1\).
Its margin is strength-dependent, not a constant shared by all pairs.

\begin{table}[H]
\centering
\resulttable
\caption{Scale estimators and their supervision and fitting schedules.}
\label{tab:scale-estimator-definitions}
\begin{tabular*}{\linewidth}{@{\extracolsep{\fill}}lp{0.62\linewidth}@{}}
\toprule
Scale estimator & Supervision and fitting schedule\\
\midrule
Pooled OOF, WR-fit & One prompt head fitted to all pooled OOF values \(\widetilde B_i\)
using the WR loss, then frozen; this is the main formulation.\\
Pooled OOF, AO-fit & One prompt head fitted to all pooled OOF values \(\widetilde B_i\)
using the AO loss, then frozen.\\
Cross-fitted, AO-fit & Five prompt or pair heads fitted on four folds of
OOF values \(\widetilde B_i\) each; held-out predictions are pooled and frozen.\\
Joint, AO-fit & A prompt or pair head fitted alongside the final policy
using detached current-policy values \(B_i\).\\
Entropy-CF & Five prompt or pair heads regress annotation dispersion on
four folds; pooled held-out predictions are transformed and frozen.\\
EMA-joint & A prompt or pair head uses current-policy values \(B_i\) and
moving-average centering during policy training.\\
\bottomrule
\end{tabular*}
\end{table}

\paragraph{AO-fitted scales.}
The cross-fitted and pooled AO-fit estimators minimize
\begin{equation}
 \mathcal L_q^{\mathrm{AO}}(\psi;\mathcal I,\mathcal U)
 =
 \frac{1}{|\mathcal I|}\sum_{i\in\mathcal I}
 \softplus\!\left(c_i-
 \frac{\widetilde B_i}{q_\psi(\chi_i)}\right)
 +\frac{0.01}{|\mathcal U|}\sum_{u\in\mathcal U}
 [\log q_\psi(u)]^2 .
 \label{eq:new-scale-q-fit}
\end{equation}
The data-fit term averages eligible comparison rows \(\mathcal I\);
centering and regularization average unique fit units \(\mathcal U\).
These are all unique training units for joint and pooled heads, and only
units outside fold \(f\), denoted \(\mathcal U_{-f}\), for cross-fitted head \(f\).
This is an AO scale-fitting objective even when the final policy uses WR.
Joint AO-fit heads use the same objective with each fixed OOF value
\(\widetilde B_i\) replaced by \(\operatorname{stopgrad}(B_i)\),
the detached current-policy scaled log-ratio difference. Each scale update uses the eligible current
batch for data fit and all unique training units for centering and
regularization. Policy updates detach \(q\). The AO and WR policies
therefore train separate heads whose realized scales need not coincide.

Frozen estimators use five fixed-margin pilots trained for 150 updates
on four of five prompt folds (fold seed 260836, policy seed 42).
Each comparison receives the scaled OOF log-ratio difference
\(\widetilde B_i=\beta_0 A_i^{(-f(i))}\) from the pilot excluding its fold.
Pooled fitting trains one scale head on all eligible OOF rows.
Cross-fitted scale head \(f\) instead fits rows outside fold \(f\), then
predicts that fold. Both use 150 full eligible-fit-row updates, not
150 epochs or policy-sized minibatches.
The architecture and optimizer follow Appendix~\ref{app:scale-features}.
Joint and pooled heads use seed 42; cross-fitted head \(f\), indexed
by \(f\in\{0,\ldots,4\}\), uses seed \(42+1009(f+1)\).

The WR-fit pooled estimator changes the scale-fitting loss to
\Eqref{eq:global-wr-scale-fit}. It reuses the same five pilots' values \(\widetilde B_i\),
35,945 valid OOF rows, and 22,756 normalization prompts.
Features, optimizer, initialization and the 150-update fit budget are
unchanged. One frozen WR-fitted head is shared by the two final
\method{} policies.

\paragraph{Pair features and normalization.}
Prompt inputs follow Appendix~\ref{app:scale-features}. Pair inputs have
388 dimensions: the prompt projection, symmetric sums, absolute differences
and products of 64-dimensional signed token-hash response vectors, two
prompt--response interaction vectors, and four length/similarity/overlap
scalars. These fixed response features are invariant to exchanging the
two responses. Domain contributes a learned additive scalar; neither
head receives winner indicators or strength as a feature.
Standardization uses all unique training units, including for
fold-specific heads; it is label-blind rather than refitted within each fold.
Width-16 GELU heads start at \(q=1\) and have no feature-perturbation penalty.

The parameterization in Equations~\ref{eq:bounded-raw-scale}--\ref{eq:scale-log-param},
with \(b=\log(2)/2\), applies to the appropriate prompt or pair inputs.
It gives geometric mean one on the fit population and \(q\in[0.5,2]\).
Each fitted head freezes its own fit-population centering means when
predicting held-out or evaluation inputs.  For each cross-fitted feature
scope separately, let \(\ell_{-f}(u)\) be that head's mapped log scale.
The final calibration is
\begin{equation}
 c_{\mathrm{OOF}}=\frac{1}{|\mathcal U_{\mathrm{all}}|}
 \sum_{u\in\mathcal U_{\mathrm{all}}}\ell_{-f(u)}(u),\qquad
 \widehat q_{-f(i)}(\chi_i)=
 \exp\!\left(\ell_{-f(i)}(\chi_i)-c_{\mathrm{OOF}}\right).
 \label{eq:new-scale-oof-calibration}
\end{equation}
Evaluation uses
\(\exp(\frac15\sum_{f=0}^4\ell_{-f}(u)-c_{\mathrm{OOF}})\).
Final bounds are checked after this single additive log offset; there
is no second tanh map.
These constraints define relative scale units, not proof that \(q\)
recovers true annotator noise rather than task difficulty or policy error.
There are 22,756 unique prompt units and 23,463 unique symmetric pair
units. These define the corresponding normalization populations.

The two-stage cross-fitting is not fully nested: a head excluding fold
\(f\) uses other rows' OOF values \(\widetilde B_i\) from pilots that may have trained
on \(f\). Standardization and final calibration also use pooled training
inputs or predictions. Thus the exclusions provide pilot-level OOF values
and direct scale-fit fold exclusion, not independence of the entire fitted
estimator from each held-out fold.

\subsection{Surrogate-scale comparisons}
\label{app:new-scale-confirmation}
\label{app:original-results-limitations}

Table~\ref{tab:original-selection-abbrev} compares joint, cross-fitted,
and pooled scale estimators on both HS3 1B evaluation panels.
Among the eight joint/cross-fitted configurations, the frozen rule required
positive mean reward difference and \(L/L_D\leq1.15\), then maximized mean
reward difference, with win rate as a tie-break. It selected joint prompt
WR. The four pooled configurations were outside this selection set;
WR-fitted scales were added after inspecting the earlier results.
The 800-prompt evaluation retains the same policies without reselection.

\begin{table}[H]
\centering
\resulttable
\caption{Surrogate scale-learning formulations on the HS3 1B 400- and
800-prompt evaluation panels. All policies use seed 42 and 150 updates.
Each panel uses outputs from the same DPO reference checkpoint.}
\label{tab:original-selection-abbrev}
\label{tab:new-scale-confirmation-42}
\setlength{\tabcolsep}{1.8pt}
\begin{tabular*}{\linewidth}{@{\extracolsep{\fill}}>{\raggedright\arraybackslash}p{74pt}rrrrrr@{}}
\toprule
& \multicolumn{3}{c}{\textbf{400 prompts}} & \multicolumn{3}{c}{\textbf{800 prompts}} \\
\cmidrule(lr){2-4}\cmidrule(lr){5-7}
Method & Win (\%) & \(\Delta R\) & \(L/L_D\)
       & Win (\%) & \(\Delta R\) & \(L/L_D\) \\
\midrule
Fixed-margin DPO & \winci{50.50}{45.63}{55.38} & \winci{+0.341}{-0.284}{+0.988} & 1.074
 & \winci{50.62}{47.19}{53.94} & \winci{+0.216}{-0.281}{+0.716} & 1.094 \\
\tablegroup{7}{Pooled prompt scale: AO-fitted \(q\)}
Pooled prompt AO (AO-fit) & \winci{57.00}{52.13}{61.75} & \winci{+1.173}{+0.536}{+1.825} & 1.072
 & \winci{55.06}{51.56}{58.44} & \winci{+0.666}{+0.159}{+1.170} & 1.082 \\
Pooled prompt WR (AO-fit) & \winci{54.50}{49.63}{59.38} & \winci{+0.818}{+0.155}{+1.462} & 1.085
 & \winci{55.38}{51.88}{58.81} & \winci{+0.633}{+0.115}{+1.150} & 1.089 \\
\tablegroup{7}{Pooled prompt scale: WR-fitted \(q\)}
\method{}-AO & \winci{58.25}{53.50}{63.00} & \winci{+1.040}{+0.384}{+1.693} & 1.064
 & \winci{56.94}{53.56}{60.31} & \winci{+0.929}{+0.434}{+1.420} & 1.085 \\
\method{}-WR & \winci{58.75}{54.00}{63.50} & \winci{+1.071}{+0.364}{+1.788} & 1.073
 & \winci{55.88}{52.44}{59.25} & \winci{+0.690}{+0.200}{+1.183} & 1.089 \\
\tablegroup{7}{Jointly learned scale}
Joint pair AO & \winci{56.50}{51.75}{61.25} & \winci{+1.048}{+0.362}{+1.735} & 1.073
 & \winci{52.31}{48.94}{55.75} & \winci{+0.383}{-0.118}{+0.877} & 1.084 \\
Joint pair WR & \winci{56.88}{52.00}{61.75} & \winci{+1.061}{+0.390}{+1.730} & 1.069
 & \winci{53.25}{49.81}{56.63} & \winci{+0.334}{-0.164}{+0.831} & 1.092 \\
Joint prompt AO & \winci{56.63}{51.75}{61.38} & \winci{+0.953}{+0.248}{+1.658} & 1.072
 & \winci{52.56}{49.13}{56.00} & \winci{+0.214}{-0.309}{+0.734} & 1.084 \\
Joint prompt WR\(^{*}\) & \winci{57.25}{52.38}{62.00} & \winci{+1.298}{+0.627}{+1.983} & 1.073
 & \winci{53.06}{49.69}{56.56} & \winci{+0.374}{-0.128}{+0.876} & 1.091 \\
\tablegroup{7}{Cross-fitted scale}
Cross-fitted pair AO & \winci{58.38}{53.63}{63.00} & \winci{+1.151}{+0.496}{+1.804} & 1.075
 & \winci{53.88}{50.38}{57.31} & \winci{+0.595}{+0.119}{+1.077} & 1.083 \\
Cross-fitted pair WR & \winci{57.88}{53.00}{62.63} & \winci{+1.195}{+0.512}{+1.865} & 1.072
 & \winci{53.81}{50.25}{57.31} & \winci{+0.566}{+0.077}{+1.061} & 1.085 \\
Cross-fitted prompt AO & \winci{57.13}{52.25}{61.88} & \winci{+1.060}{+0.395}{+1.719} & 1.059
 & \winci{53.19}{49.69}{56.69} & \winci{+0.601}{+0.108}{+1.087} & 1.078 \\
Cross-fitted prompt WR & \winci{55.63}{51.00}{60.38} & \winci{+1.056}{+0.364}{+1.746} & 1.057
 & \winci{51.88}{48.44}{55.38} & \winci{+0.541}{+0.049}{+1.043} & 1.072 \\
\bottomrule
\end{tabular*}
\tablenote{Brackets give conditional 95\% paired-bootstrap CIs; length ratios
are point estimates. The asterisk preserves the eight-arm 400-prompt selection.
}
\end{table}

\FloatBarrier
On the 800-prompt panel, reward-difference intervals include zero for the
joint configurations and are positive for the cross-fitted and pooled
configurations. These are comparisons against DPO, not direct tests among
estimators or evidence of annotation-noise recovery.

A direct 400-prompt comparison of WR-fit against AO-fit pooled scales
is inconclusive. For final WR policies, WR-fit obtains 51.75\%
[46.88, 56.50] wins and reward difference \(+0.254\)
[\(-0.285,+0.786\)] against AO-fit. For final AO policies, the corresponding
values are 48.75\% [43.88, 53.63] and \(-0.133\)
[\(-0.642,+0.366\)]. Both win-rate intervals include 50\% and both
reward-difference intervals include zero.

\FloatBarrier
\subsection{Entropy-supervised and EMA-centered scales}
\label{app:alternative-formulation-comparison}
\label{app:original-protocol-confirmation}

These earlier estimators differ from the current global-scale proposals;
their additional evaluations are not replications of those proposals.
The five-arm 400-prompt screen comprised EMA-joint pair AO and all four
Entropy-CF prompt/pair AO/WR configurations. The same positive-reward and
length-guard rule selected Entropy-CF prompt WR by maximum mean reward
difference; three subsequent EMA-joint additions did not change that selection.
Table~\ref{tab:alternative-scale-formulations} also includes the three
800-prompt comparisons at each of seeds 42 and 17, always against same-seed
DPO. This panel is disjoint from the 400-prompt.

\begin{table}[H]
\centering
\resulttable
\caption{Earlier entropy-supervised and EMA-centered HS3 1B estimators,
grouped by evaluation panel and policy seed. Every row uses same-seed DPO;
the seed-17 reference is independently trained.}
\label{tab:alternative-scale-formulations}
\label{tab:original-confirmation-42}
\label{tab:original-confirmation-17}
\begin{tabular*}{\linewidth}{@{\extracolsep{\fill}}lrrr@{}}
\toprule
Method & Win (\%) & \(\Delta R\) & \(L/L_D\) \\
\midrule
\multicolumn{4}{l}{\emph{400 prompts, seed 42}} \\
EMA-joint pair AO & \winci{52.75}{47.88}{57.63} & \(+0.534\) & --- \\
Entropy-CF pair AO & \winci{51.88}{47.13}{56.75} & \(+0.518\) & --- \\
Entropy-CF prompt AO & \winci{52.00}{47.12}{56.88} & \(+0.402\) & --- \\
Entropy-CF prompt WR\(^{*}\) & \winci{56.00}{51.13}{60.88} & \winci{+0.996}{+0.309}{+1.683} & 1.070 \\
Entropy-CF pair WR & \winci{48.25}{43.38}{53.00} & \(+0.186\) & --- \\
\addlinespace
EMA-joint prompt AO\(^{\dagger}\) & \winci{53.88}{49.00}{58.75} & \winci{+0.798}{+0.136}{+1.477} & 1.090 \\
EMA-joint prompt WR\(^{\dagger}\) & \winci{49.13}{44.25}{53.88} & \winci{+0.381}{-0.266}{+1.037} & 1.073 \\
EMA-joint pair WR\(^{\dagger}\) & \winci{53.25}{48.38}{58.13} & \winci{+0.633}{+0.011}{+1.275} & 1.063 \\
\midrule
\multicolumn{4}{l}{\emph{800 prompts, seed 42}} \\
EMA-joint prompt AO & \winci{54.25}{50.75}{57.69} & \winci{+0.411}{-0.091}{+0.911} & 1.097 \\
EMA-joint prompt WR & \winci{53.06}{49.62}{56.50} & \winci{+0.255}{-0.251}{+0.766} & 1.091 \\
Entropy-CF prompt WR & \winci{54.44}{50.94}{57.81} & \winci{+0.392}{-0.104}{+0.892} & 1.093 \\
\midrule
\multicolumn{4}{l}{\emph{800 prompts, seed 17}} \\
EMA-joint prompt AO & \winci{54.25}{50.81}{57.75} & \winci{+0.840}{+0.309}{+1.382} & 1.073 \\
EMA-joint prompt WR & \winci{53.19}{49.81}{56.63} & \winci{+0.864}{+0.330}{+1.416} & 1.069 \\
Entropy-CF prompt WR & \winci{55.13}{51.75}{58.50} & \winci{+0.827}{+0.301}{+1.359} & 1.059 \\
\bottomrule
\end{tabular*}
\tablenote{The asterisk marks the five-arm reward-based selection; daggers
mark subsequent descriptive additions. The 800-prompt EMA-joint rows are
additional comparisons, not new selections. Brackets are conditional 95\%
paired-prompt bootstrap intervals, unadjusted for selection or training-seed
uncertainty; length ratios are point estimates. Four 400-prompt rows retain
only reward point estimates: their omitted reward intervals and dashed
length cells denote unrecovered quantities, not unevaluated methods.
Point win rates retain the precision of the original result records.}
\end{table}

\paragraph{How Entropy-CF learns \(q\).}
For comparison \(i\), let \(n_{ic}\) count annotations in each of the six
signed-strength categories \(c\in\{-3,-2,-1,1,2,3\}\), and let
\(n_i=\sum_c n_{ic}\).  The Jeffreys-smoothed target is
\begin{equation}
 p_{ic}=\frac{n_{ic}+1/2}{n_i+3},\qquad
 u_i=\frac{\log 2}{\log 6}
 \left[-\sum_c p_{ic}\log p_{ic}\right]\in[0,\log 2].
 \label{eq:alternative-entropy-target}
\end{equation}
This is signed-strength dispersion for repeated judgments of one pair,
not merely directional disagreement or an observed latent noise scale;
it requires neither distinct pairs nor comparison cycles. Five
prompt-grouped folds fit \(h_{-f}\) outside fold \(f\) by minimizing
\begin{equation}
 \mathbb E_{i\notin f}
 \operatorname{SmoothL1}(h_{-f}(\chi_i),u_i)
 +0.01\,\mathcal L_{\mathrm{local\ smoothness}} .
 \label{eq:alternative-entropy-fit}
\end{equation}
The local penalty acts on raw predictions:
\[
 \mathcal S(g)=\mathbb E\!\left[
 \frac{(g(v+\epsilon,d)-g(v,d))^2}{0.01^2}\right],
 \quad \epsilon\sim\mathcal N(0,0.01^2I),\qquad
 \mathcal L_{\mathrm{local\ smoothness}}=\mathcal S(h).
\]
Only normalized features \(v\) are perturbed; domain \(d\) is fixed.
Width-16 GELU heads regress raw \(h\), not \(q\) or \(\log q\), for
30 epochs in batches of 512. Head \(f\in\{0,\ldots,4\}\) uses seed
\(260827+1009(f+1)\); deterministic prompt-hash folds differ from the
surrogate-fit seeded folds. Prompt inputs are embeddings and domain;
pair heads add symmetric response features, never winner/strength labels.
Both Entropy-CF and EMA-joint use AdamW with learning rate and weight decay
\(10^{-3}\), betas \((0.9,0.999)\), epsilon \(10^{-8}\), and
gradient-norm clipping at 1.

Pool held-out raw predictions \(\widehat u_i=h_{-f(i)}(\chi_i)\), apply
the centered--tanh-bounded--recentered map of
Equations~\ref{eq:bounded-raw-scale}--\ref{eq:scale-log-param}
\emph{once to that pooled vector} using comparison-row, not unique-unit,
means, and freeze \(q_i\). Evaluation averages the five raw head predictions
and uses the same training-derived transformation. No pilots or OOF policy
advantages enter this recipe. Prompt AO/WR share one frozen \(q\) artifact;
pair AO/WR share another; within each input class, only residual placement differs.

\paragraph{Out-of-fold predictive diagnostic.}
Across 36,299 retained HS3 comparisons, pooled Pearson correlations of
held-out raw predictions \(\widehat u_i\) with entropy targets \(u_i\)
are \(0.01543\) (prompt) and \(0.02715\) (pair): very weak linear
association, not a test of \(q\) against ground-truth noise or policy-evaluation
scores. The gate required positive correlation, nondegenerate scale
variation, and geometric centering; passing it establishes neither useful
accuracy nor significance. The transformation supplies no calibration in
noise-scale units, and these diagnostics do not establish noise recovery
or rule out learning disagreement with other features or estimators.
The main scales instead fit OOF log-ratio differences and aggregate
strength without entropy supervision.

\paragraph{How EMA-joint learns \(q\).}
Width-16 Tanh heads plus a domain term fit detached current-policy
scaled log-ratio differences with
\begin{equation}
 \log q_\psi(\chi_i)=\log 2\,
 \tanh\!\left(h_\psi(\chi_i)-c_{\mathrm{EMA}}\right),
 \qquad
 c_{\mathrm{EMA}}\leftarrow0.9c_{\mathrm{EMA}}
 +0.1\,\overline h_{\mathrm{batch}}.
 \label{eq:alternative-ema-q}
\end{equation}
Their scale-fitting surrogate matches the policy placement:
\(\softplus(c_i-\operatorname{stopgrad}(B_i)/q_i)\) for AO and
\(\softplus((c_i-\operatorname{stopgrad}(B_i))/q_i)\) for WR.
It adds \((\mathbb E_{\mathrm{batch}}\log q)^2\) and
\(0.01\mathcal S(\log q)\) with the perturbation rule above. Policy seed
\(s\) initializes the head at \(s+700\); head updates accompany all
150 policy updates, which detach \(q\). EMA centering bounds
\(q\in[0.5,2]\) without exact full-fit geometric mean one.

Unlike EMA-joint, the joint AO-fit methods above use GELU heads, exact
unique-unit centering, \(0.01\,\mathbb E_{\mathrm{unique}}[(\log q)^2]\),
zero head weight decay, no feature-perturbation penalty, and AO scale fitting
even for WR policies. Prompt standardization also changes from comparison-row
to unique-prompt weighting. These simultaneous differences preclude a clean
single-factor ablation.

Entropy-CF prompt WR improves over DPO on the 400-prompt panel despite weak
held-out entropy prediction; policy utility is not evidence of noise recovery.

\begingroup
\color{black}
\subsection{Centering and regularization ablation}
\label{app:scale-constraint-ablation}

We test the role of geometric centering and squared-log-scale regularization
in \methodln{} on HelpSteer3. Each model-size comparison is matched to
the corresponding \methodln{} configuration reported in the main text.
We reuse its auxiliary-policy outputs, refit the prompt scale, freeze it,
and train a fresh final policy. We compare the centered, regularized scale
with an uncentered scale retaining regularization and an uncentered scale
without regularization. All three retain \(q\in[0.5,2]\); neither uncentered
variant is an unbounded-scale model. The uncentered head uses
\(\log q=\log 2\,\tanh(h_\psi/\log 2)\), without training-set recentering.
Relative to the centered construction, this also changes the attainable
spread of scales, so the first comparison is not solely removal of a
mean constraint. The two uncentered variants use the same parameterization
and differ only in the scale regularizer.

\begin{table}[H]
\centering
\color{black}
\resulttable
\caption{Centering and regularization in \methodln{}, matched to the
corresponding main-text configurations. Win rates compare each policy
against the same-size DPO reference.}
\label{tab:ulnm-scale-constraints}
\begin{tabular*}{\linewidth}{@{\extracolsep{\fill}}lrrrr@{}}
\toprule
& \multicolumn{2}{c}{400 prompts} & \multicolumn{2}{c}{800 prompts} \\
\cmidrule(lr){2-3}\cmidrule(lr){4-5}
Scale fitting & Win (\%) & \(L/L_D\) & Win (\%) & \(L/L_D\) \\
\midrule
\multicolumn{5}{l}{\emph{Llama-3.2-1B-Instruct}} \\
Centered, regularized & \winci{62.63}{57.88}{67.25} & 1.007 & \winci{61.50}{58.13}{64.81} & 0.998 \\
Uncentered, regularized & \winci{64.13}{59.50}{68.88} & 1.024 & \winci{62.81}{59.50}{66.13} & 1.007 \\
Uncentered, unregularized & \winci{68.75}{64.25}{73.25} & 1.039 & \winci{61.00}{57.63}{64.38} & 1.024 \\
\midrule
\multicolumn{5}{l}{\emph{Llama-3.1-8B-Instruct}} \\
Centered, regularized & \winci{68.00}{63.38}{72.38} & 0.985 & \winci{65.31}{62.00}{68.56} & 0.964 \\
Uncentered, regularized & \winci{66.38}{61.75}{70.88} & 0.991 & \winci{65.38}{62.06}{68.69} & 0.970 \\
Uncentered, unregularized & \winci{69.00}{64.38}{73.38} & 0.987 & \winci{64.63}{61.25}{67.88} & 0.980 \\
\bottomrule
\end{tabular*}

\end{table}

Removing both centering and regularization gives the highest win-rate
point estimate on 400 prompts at both model sizes, but this advantage
does not persist on 800 prompts. Retaining regularization without centering
gives the highest 800-prompt point estimate, with a negligible difference
from the centered version at 8B. These results show no consistent advantage
from removing either constraint across panels; they do not establish
significant differences between variants. We also evaluate these scale-fitting alternatives for the main-text
1B \method{}-AO configuration (Table~\ref{tab:unm-ao-scale-constraints}).
The centered, regularized version has the highest win-rate point estimate
and the shortest responses on both panels. Removing centering while
retaining regularization increases the mean reward difference on 400
prompts, but not on 800 prompts.

\begingroup
\color{black}
\subsection{Constant-scale control}
\label{app:sec:constant-scale}
Table~\ref{tab:ulnm-8b-q1-control} compares a learned prompt scale with
\(q\equiv1\), retaining length normalization and strength-dependent margins.

\begin{table}[H]
\centering
\resulttable
\color{black}
\caption{Constant-scale control for \methodln{} on HelpSteer3 with
Llama-3.1-8B-Instruct on the 400- and 800-prompt panels.}
\label{tab:ulnm-8b-q1-control}
\renewcommand{\arraystretch}{1.0}
\setlength{\tabcolsep}{1.8pt}
\begin{tabular*}{\linewidth}{@{\extracolsep{\fill}}>{\raggedright\arraybackslash}p{64pt}rrrrrr@{}}
\toprule
& \multicolumn{3}{c}{\textbf{400 prompts}} & \multicolumn{3}{c}{\textbf{800 prompts}} \\
\cmidrule(lr){2-4}\cmidrule(lr){5-7}
Prompt scale & Win (\%) & \(\Delta R\) & \(L/L_D\)
             & Win (\%) & \(\Delta R\) & \(L/L_D\) \\
\midrule
Constant \(q=1\) & \winci{62.00}{57.25}{66.63} & \winci{+1.641}{+0.890}{+2.386} & 0.991
 & \winci{56.56}{53.13}{60.00} & \winci{+0.527}{-0.071}{+1.139} & 0.992 \\
Learned \(q\) & \winci{68.00}{63.38}{72.38} & \winci{+2.372}{+1.661}{+3.080} & 0.985
 & \winci{65.31}{62.00}{68.56} & \winci{+1.796}{+1.241}{+2.356} & 0.964 \\
\bottomrule
\end{tabular*}
\end{table}
\endgroup

\textbf{We additionally compare the learned-scale ULNM-DPO-WR policy directly against its constant-scale counterpart on the same 800-prompt HelpSteer3 panel. The learned-scale policy achieves a tie-adjusted win rate of 57.69\% (95\% CI: [54.31, 61.00]) and a mean Skywork reward difference of +1.270 (95\% CI: [+0.726, +1.822]). Both intervals exclude their respective null values. With length normalization and strength-dependent margins retained in both policies, this comparison supports the benefit of learned prompt scaling in the evaluated configuration.}

\begin{table}[H]
\centering
\color{black}
\resulttable
\caption{Centering and regularization in \method{}-AO on HelpSteer3
with Llama-3.2-1B-Instruct, matched to the main-text configuration.
All policies are evaluated against the same DPO reference.}
\label{tab:unm-ao-scale-constraints}
\begin{tabular*}{\linewidth}{@{\extracolsep{\fill}}llrrr@{}}
\toprule
Panel & Scale fitting & Win (\%) & \(\Delta R\) & \(L/L_D\) \\
\midrule
400 & Centered, regularized & \winci{58.25}{53.50}{63.00} & +1.040 & 1.064 \\
400 & Uncentered, regularized & \winci{57.13}{52.37}{62.00} & +1.233 & 1.103 \\
400 & Uncentered, unregularized & \winci{54.75}{49.88}{59.63} & +0.787 & 1.107 \\
\midrule
800 & Centered, regularized & \winci{56.94}{53.56}{60.31} & +0.929 & 1.085 \\
800 & Uncentered, regularized & \winci{56.56}{53.19}{59.94} & +0.598 & 1.117 \\
800 & Uncentered, unregularized & \winci{55.88}{52.50}{59.25} & +0.725 & 1.112 \\
\bottomrule
\end{tabular*}

\end{table}

\paragraph{Earlier joint scale saturation.}
An earlier sequence-level WR experiment learned the policy and prompt
scale jointly, with separate policy and scale updates on the same minibatches.
It used the uncentered bounded
sigmoid map \(q(y)=0.5+1.5\,\sigma(h_\psi(y))\), rather than the current
centered tanh construction, and retained a squared-log-scale penalty with
coefficient \(0.01\). Its fitted scales concentrated near the upper bound:
mean \(1.9917\) and standard deviation \(0.0149\). Most policy-minus-margin
residuals were negative; increasing their divisor reduced the WR fitting
loss, favoring a nearly uniform larger scale. In its earlier 500-prompt
comparison against the matched \(q=1\) WR baseline, the win rate was
\(52.60\%\;[48.20,57.00]\), with mean reward difference
\(+0.181\;[-0.363,+0.725]\), providing no clear evidence of improvement.
An additional joint AO experiment with this sigmoid-bounded head attained
\(45.10\%\;[40.90,49.40]\) against its matched \(q=1\) baseline;
its mean reward difference was \(-0.457\;[-1.023,+0.111]\).
Thus that earlier non-tanh variant performed worse by win rate, although
its reward-difference interval included zero.
These observations motivated controlling the global scale, but are not a
matched test showing that centering improves the present frozen-scale
method. In particular, this earlier non-tanh head was still bounded;
it does not demonstrate poor performance of a fully unrestricted scale.
\endgroup

\subsection{Comparison with the initial instruction-tuned model}
\label{app:initial_model_comparison}

We compare ULNM-DPO-WR, DPO, and SimPO directly against the
untouched Llama-3.1-8B-Instruct initialization on the same
800-prompt HelpSteer3 evaluation panel, using the Skywork judge.
Here, the evaluation opponent is the initial instruction-tuned
model. We report tie-adjusted win rate, mean candidate-minus-initial
reward difference $\Delta R$, and the ratio of mean response
character lengths. Confidence intervals are 95\% percentile
intervals from 10,000 paired prompt-bootstrap resamples,
conditional on the evaluated checkpoints.

\begin{table}[t]
    \centering
    \small
    \setlength{\tabcolsep}{5pt}
    \caption{Direct comparisons against the untouched
    Llama-3.1-8B-Instruct initialization on 800 HelpSteer3 prompts.
    Brackets indicate conditional 95\% confidence intervals.
    Length ratios are relative to the initial model.}
    \label{tab:initial_model_comparison}
    \begin{tabular}{lccc}
        \toprule
        Method & Win (\%) $\uparrow$ & $\Delta R$ $\uparrow$
        & Length ratio \\
        \midrule
        ULNM-DPO-WR
        & \textbf{70.13} & \textbf{+2.949} & 1.034 \\
        & [66.94, 73.31] & [2.388, 3.511] & \\[2pt]
        DPO
        & 59.44 & +1.152 & 1.072 \\
        & [56.19, 62.88] & [0.630, 1.679] & \\[2pt]
        SimPO
        & 62.94 & +2.377 & 1.042 \\
        & [59.62, 66.19] & [1.707, 3.034] & \\
        \bottomrule
    \end{tabular}
\end{table}

ULNM-DPO-WR improves over the initialization under this judge:
its win-rate interval excludes 50\%, and its reward-difference
interval excludes zero. DPO also improves over the initialization,
addressing the concern that ULNM's advantage over DPO primarily
reflects degradation of the DPO comparator. ULNM achieves these
gains with a smaller increase in mean response length than DPO
or SimPO. Together with the matched constant-scale comparison,
these results support the empirical value of the proposed
training procedure.

\FloatBarrier
\section{Hyperparameter and Training-Budget Sensitivity}
\label{app:hyperparameter-sensitivity}

This appendix reports configuration sensitivity under the shared evaluation
protocol in Appendix~\ref{app:experimental-details}. All comparisons use
seed 42. Win rates are tie-adjusted against the stated DPO reference;
confidence intervals are conditional on the trained policies and do not
adjust for configuration selection.

\subsection{HelpSteer2 1B: learning rate and training duration}
\label{app:hs2-hpo}

\paragraph{Policy comparison.}
Table~\ref{tab:hs2-hpo-main} reports comparisons on one panel against
the same original 104-update DPO responses. The related baselines retain
their method-specific settings and 104-update budgets; their full-panel
outputs are restricted to the same 224 indices before recomputing statistics.

\begin{table}[H]
\centering
\resulttable
\renewcommand{\arraystretch}{1.08}
\let\hswin\winci
\caption{HelpSteer2 with Llama-3.2-1B-Instruct on the
224-prompt   panel.}
\label{tab:hs2-hpo-main}
\begin{tabular*}{\linewidth}{@{\extracolsep{\fill}}lrrr@{}}
\toprule
Method & Win (\%) \(\uparrow\) & \(\Delta R\) (point) \(\uparrow\) & \(L/L_D\) \\
\midrule
DPO (reference) & 50.00 & 0.000 & 1.000 \\
Fixed-margin DPO & \hswin{52.23}{45.76}{58.71} & +0.825 & 1.023 \\
ODPO & \hswin{49.33}{42.63}{55.80} & +0.759 & 1.018 \\
MMPO & \hswin{51.56}{44.87}{58.04} & $-0.066$ & 0.965 \\
$\beta$-DPO & \hswin{57.59}{51.34}{63.84} & +1.173 & 1.016 \\
$\gamma$-PO (DPO) & \hswin{55.80}{49.33}{62.28} & +0.786 & 1.020 \\
LogNormal MixDPO & \hswin{53.57}{47.10}{60.27} & +0.794 & 1.026 \\
SimPO & \hswin{62.95}{56.69}{69.20} & +1.749 & 0.927 \\
SPO-basic & \hswin{37.72}{31.47}{44.20} & $-2.108$ & 1.203 \\
\midrule
\textbf{\method{}-AO} & \hswin{56.03}{49.55}{62.50} & +0.784 & 0.984 \\
\textbf{\method{}-WR} & \hswin{60.27}{54.01}{66.52} & +0.824 & 0.975 \\
\textbf{\methodln{}} & \hswin{64.73}{58.48}{70.98} & +1.978 & 0.959 \\
\bottomrule
\end{tabular*}
\end{table}

For the \methodln{} comparison in Table~\ref{tab:hs2-hpo-main},
the reward difference is \(+1.978\) [\(+1.042,+2.895\)].
Its length normalization and scale refitting are applied together
(Appendix~\ref{app:normalized-implementation}), so this comparison
does not isolate their individual contributions.

\paragraph{Learning-rate and training-duration sensitivity.}
Table~\ref{tab:hs2-hpo-all} reports the full twelve-configuration comparison
to characterize sensitivity to learning rate and training duration, rather
than rank or select configurations. For each of AO and WR, we vary the
learning rate over \(\{10^{-6},2\times10^{-6},5\times10^{-6}\}\) and
the training budget over 52 and 104 updates. All configurations share the
same frozen HelpSteer2 WR-fitted prompt scale, \(\beta_0=0.05\),
\(\tau=1\), and \(q\in[0.5,2]\), and use the same 224 prompts and
DPO reference as Table~\ref{tab:hs2-hpo-main}.
With 6,766 training comparisons, batch size 128 and \texttt{drop\_last},
52 updates constitute one loader epoch and 104 two epochs; the incomplete
110-row tail is omitted each epoch. The learning rate is constant after
20 warm-up updates.

\begin{table}[htbp]
\centering
\resulttable
\caption{Sensitivity to learning rate and training duration for HelpSteer2
with Llama-3.2-1B-Instruct on the 224-prompt panel. All other settings are
held fixed.}
\label{tab:hs2-hpo-all}
\begin{tabular*}{\linewidth}{@{\extracolsep{\fill}}lrrrrr@{}}
\toprule
Policy & Learning rate & Updates & Win (\%) & $\Delta R$ & $L/L_D$ \\
\midrule
AO & $10^{-6}$ & 52 & \winci{54.69}{48.21}{61.16} & +0.241 & 0.965 \\
AO & $2\times10^{-6}$ & 52 & \winci{56.92}{50.45}{63.17} & +1.041 & 0.967 \\
AO & $5\times10^{-6}$ & 52 & \winci{55.36}{48.66}{61.61} & +0.203 & 0.958 \\
AO & $10^{-6}$ & 104 & \winci{56.03}{49.55}{62.50} & +0.784 & 0.984 \\
AO & $2\times10^{-6}$ & 104 & \winci{55.80}{49.55}{62.05} & +0.883 & 0.974 \\
AO & $5\times10^{-6}$ & 104 & \winci{54.69}{48.21}{61.17} & +0.873 & 0.996 \\
\midrule
WR & $10^{-6}$ & 52 & \winci{56.03}{49.55}{62.28} & +0.099 & 0.977 \\
WR & $2\times10^{-6}$ & 52 & \winci{57.81}{51.34}{64.06} & +1.489 & 0.977 \\
WR & $5\times10^{-6}$ & 52 & \winci{50.00}{43.30}{56.25} & $-0.323$ & 0.951 \\
WR & $10^{-6}$ & 104 & \winci{60.27}{54.01}{66.52} & +0.824 & 0.975 \\
WR & $2\times10^{-6}$ & 104 & \winci{53.13}{46.43}{59.82} & +0.533 & 0.973 \\
WR & $5\times10^{-6}$ & 104 & \winci{57.14}{50.89}{63.39} & +0.972 & 0.995 \\
\bottomrule
\end{tabular*}
\end{table}
Eleven of twelve configurations have positive reward-difference point
estimates on this panel. 
Given the computational cost of training, our main comparisons use a fixed training seed (42). Appendix \ref{app:alternative-formulation-comparison} provides analysis of three earlier 1B entropy-supervised and EMA-centered estimators using seeds 17 and 42, each evaluated against a same-seed DPO baseline. Their win-rate point estimates are similar across the two seeds, providing evidence of stability for those configurations. Our paired-prompt bootstrap confidence intervals quantify evaluation-prompt uncertainty conditional on the trained checkpoints; broader replication across training seeds remains future work.

\FloatBarrier
\subsection{HelpSteer2 1B: margin sensitivity}

We vary only the final-policy
margin coefficient to \(\tau\in\{0.5,1.5\}\), retaining learning rate
\(10^{-6}\).
Table~\ref{tab:hs2-margin-sensitivity} includes the original
\(\tau=1\) configuration. The scale is not refitted.

\begin{table}[htbp]
\centering
\resulttable
\caption{HelpSteer2 WR margin sensitivity on the same 224-prompt panel. Learning rate, update budget and fitted scale are fixed.}
\label{tab:hs2-margin-sensitivity}
\begin{tabular*}{\linewidth}{@{\extracolsep{\fill}}lrrr@{}}
\toprule
$\tau$ & Win (\%) & $\Delta R$ & $L/L_D$ \\
\midrule
0.5 & \winci{54.46}{48.21}{60.71} & +0.639 & 0.979 \\
1.0 & \winci{60.27}{54.01}{66.52} & +0.824 & 0.975 \\
1.5 & \winci{54.24}{47.54}{60.49} & +0.717 & 0.989 \\
\bottomrule
\end{tabular*}
\tablenote{Same DPO reference and evaluation protocol as
Table~\ref{tab:hs2-hpo-all}. Brackets give conditional 95\% win-rate CIs.}
\end{table}

Both alternatives have lower point estimates than \(\tau=1\);
their reward-difference intervals are [\(-0.349,+1.615\)] and
[\(-0.292,+1.727\)] for \(\tau=0.5\) and \(1.5\), respectively.
These DPO-relative intervals do not test differences between margin
settings.

\clearpage
\section{AlpacaEval Comparison of HelpSteer3-Trained 8B Policies}
\label{app:alpaca-8b}

\paragraph{Setup.}
We evaluate the frozen Llama-3.1-8B-Instruct policies from
Table~\ref{tab:matched-8b}\textcolor{black}{, together with a \methodln{}
\(q\equiv1\) control}, each trained on HelpSteer3 for 150 final-policy
updates with seed 42, on the 805 AlpacaEval instructions
\citep{alpaca_eval}. Every policy, including DPO, is compared with the
same stored GPT-4-Turbo reference answers; DPO is not the opponent in
this evaluation. Candidate generation is greedy with at most 2,048 new
tokens, the saved native chat template, no input truncation, bfloat16
precision, batch size four, and generation seed zero.
These generation settings differ from the 512-token limit used for the
HelpSteer panels.

We use the \texttt{weighted\_alpaca\_eval\_gpt4.1} configuration with
judge snapshot \texttt{gpt-4.1-2025-04-14}, randomized answer order,
annotator seed zero, temperature one, and one output token with label
log probabilities. This replaces the original AlpacaEval~2 GPT-4-Turbo
judge, while retaining its reference answers and evaluation framework;
the scores are therefore not directly interchangeable with published
results using the original judge. The implementation is pinned to
AlpacaEval revision \path{cd543a149df89434d8a54582c0151c0b945c3d20}
and dataset revision \path{2edc6fad8be6b14ea7230aabfd08188da6b8b814}.

\paragraph{Metrics.}
\textcolor{black}{We report length-controlled (LC) win rate, which uses AlpacaEval's logistic adjustment}
for response-length differences \citep{dubois2024length}, evaluated at
equal candidate and reference lengths. We use
\texttt{length\_controlled\_v1}, five cross-validation splits,
random seed 123, and regularization coefficient 0.2.
\textcolor{black}{This metric differs} from the Skywork tie-adjusted win rate in
\Eqref{eq:main-evaluation-metrics}. All results passed independent
recomputation and output-checksum verification before reporting.

\begin{table}[H]
\centering
\resulttable
\caption{AlpacaEval results for HelpSteer3-trained Llama-3.1-8B-Instruct
policies, using GPT-4-Turbo reference answers and a GPT-4.1 judge.
All methods use the same 805 instructions.}
\label{tab:alpaca-8b}
\begin{tabular*}{\linewidth}{@{\extracolsep{\fill}}lrr@{}}
\toprule
Method & LC win (\%) $\uparrow$ & Mean characters \\
\midrule
DPO & 16.39 & 2148 \\
Fixed-margin DPO & 15.72 & 2348 \\
ODPO & 17.29 & 2284 \\
MMPO & 13.71 & 2492 \\
$\beta$-DPO & 0.49 & 8445 \\
$\gamma$-PO (DPO) & 14.80 & 2198 \\
LogNormal MixDPO & 13.67 & 2461 \\
SimPO & 15.30 & 2152 \\
SPO-basic & 17.44 & 3002 \\
\textcolor{black}{\methodln{} (\(q\equiv1\) control)} & \textcolor{black}{17.83} & \textcolor{black}{2374} \\
\midrule
\method{}-AO & 15.80 & 2446 \\
\method{}-WR & 16.98 & 2323 \\
\methodln{} & \textbf{21.62} & 2060 \\

\bottomrule
\end{tabular*}
\tablenote{\textcolor{black}{Bold marks the largest LC win-rate point estimate,
not statistical significance.} Mean characters measures generated
response length. Scores are against a common GPT-4-Turbo reference,
not direct comparisons between the trained policies.}
\end{table}

\paragraph{Results.}
\methodln{} has the highest LC win-rate point estimate, exceeding DPO
by 5.23 percentage points and SimPO by 6.32 points.
\textcolor{black}{The \(q\equiv1\) control achieves an LC win rate of 17.83\%.}
The LC result supports the usefulness of \methodln{} beyond the
HelpSteer prompt panels under a different judge.

\clearpage
\section{Related Work}
\label{app:related-work}

\paragraph{Preference optimization with strength information.}
DPO expresses preference learning through reference-relative response
log probabilities and a Bradley--Terry objective
\citep{rafailov2023direct}. ODPO incorporates comparison-dependent
offsets derived from response-quality differences
\citep{amini2024offset}, whereas MMPO converts quality margins into
soft preference targets \citep{kim2024mmpo}. HelpSteer2-Preference studies
margin-shifted and strength-weighted Bradley--Terry losses for reward
modeling and corresponding DPO variants
\citep{wang2025helpsteer2preference}. \method{} combines strength-dependent
margins with a learned prompt scale: AO normalizes the reward difference
before subtracting the margin, while WR normalizes the margin-subtracted
difference.

\paragraph{Adaptive objectives and preference heterogeneity.}
\(\beta\)-DPO adjusts a batch-level coefficient using reward-difference
statistics \citep{wu2024betadpo}, and \(\gamma\)-PO adapts comparison-specific
target margins during training \citep{sun2025dynamic}. MixDPO integrates
preference probabilities over a learned population-level distribution
of preference sensitivities \citep{imai2026mixdpo}; SPO instead develops
soft preference objectives for matching expert distributions
\citep{sharifnassab2024soft}. Our prompt-scale function is fitted to fixed
out-of-fold auxiliary-policy reward differences and strength margins,
then frozen before final-policy training. Disagreement can reflect task underspecification
or legitimate differences in preferences \citep{zhang2025ICML}; the
fitted scale is therefore not assumed to recover annotation noise.
Adaptive Preference Scaling \citep{hong2024adaptive} optimizes
comparison-specific scales during preference training. Its Ada-DPO
objective divides the preference gap by the scale and multiplies the
logistic loss by that same scale. The reported formulation uses neither
an explicit strength-dependent margin nor response-wise length
normalization. In one dialogue experiment, its quadratic-regularization
variant achieves a win rate of 56.00\%, compared with 53.38\% for DPO,
both evaluated against the same instruction-tuned reference
\citep[Appendix~C.1, Table~6]{hong2024adaptive}.

\paragraph{Length-normalized preference optimization.}
SimPO combines average response-token log probability with a constant
target margin, without reference-policy subtraction
\citep{meng2024simpo}. Appendix~H of that work also studies DPO with
response-wise length-normalized reference log-probability ratios.
Building on this objective, \methodln{} combines length normalization
with strength-dependent margins and learned prompt scales, using
normalized quantities in both scale fitting and final-policy training.
Our analysis isolates how response-length variation can affect scale
fitting even when normalized reward differences and preference strength
are unchanged.

\paragraph{Paired-comparison models and identification.}
Tie-aware and ordinal extensions of Bradley--Terry provide established
models for richer comparison outcomes
\citep{rao1967ties,agresti1992ordinal}. ODPO also gives a random-utility
threshold interpretation of margin shifts \citep{amini2024offset}.
Comparison-graph decompositions distinguish response-level differences
from cyclic components \citep{jiang2009statistical}. Building on this
structure, our theorem characterizes when known margins identify an
unknown prompt scale from given WR comparison scores. This structural
result is distinct from identifying latent noise from observed
preference labels.

\end{document}

%% file: math_commands.tex
\usepackage{amsmath,amsfonts,bm}

\def\eqref#1{equation~\ref{#1}}
\def\Eqref#1{Equation~\ref{#1}}

\def\1{\bm{1}}

\DeclareMathAlphabet{\mathsfit}{\encodingdefault}{\sfdefault}{m}{sl}
\SetMathAlphabet{\mathsfit}{bold}{\encodingdefault}{\sfdefault}{bx}{n}

\newcommand{\E}{\mathbb{E}}

\newcommand{\R}{\mathbb{R}}

\newcommand{\softplus}{\zeta}

